\theoremstyle{plain}
\newtheorem{theorem}{Theorem}[section]
\newtheorem{lemma}[theorem]{Lemma}
\newtheorem{corollary}[theorem]{Corollary}
\theoremstyle{definition}
\theoremstyle{remark}
\title{Rewind-to-Delete: Certified Machine Unlearning for Nonconvex Functions}
\author{%
  Siqiao~Mu \\
  Department of Engineering Sciences and Applied Mathematics\\
  Northwestern University\\
  Evanston, IL 60208 \\
  \texttt{siqiaomu2026@u.northwestern.edu} \\
   \And
    Diego Klabjan \\
    Department of Industrial Engineering and Management Sciences \\
    Northwestern University \\
    Evanston, IL 60208 \\
   \texttt{d-klabjan@northwestern.edu} \\
}
\begin{document}

\maketitle

\begin{abstract}
Machine unlearning algorithms aim to efficiently remove data from a model without retraining it from scratch, in order to remove corrupted or outdated data or respect a user's ``right to be forgotten." Certified machine unlearning is a strong theoretical guarantee based on differential privacy that quantifies the extent to which an algorithm erases data from the model weights. In contrast to existing works in certified unlearning for convex or strongly convex loss functions, or nonconvex objectives with limiting assumptions, we propose the first, first-order, black-box (i.e., can be applied to models pretrained with vanilla gradient descent)  algorithm for unlearning on general nonconvex loss functions, which unlearns by ``rewinding" to an earlier step during the learning process before performing gradient descent on the loss function of the retained data points. We prove $(\epsilon, \delta)$ certified unlearning and performance guarantees that establish the privacy-utility-complexity tradeoff of our algorithm, and we prove generalization guarantees for  functions that satisfy the Polyak-Lojasiewicz inequality. Finally, we demonstrate the superior performance of our algorithm compared to existing methods, within a new experimental framework that more accurately reflects unlearning user data in practice.
\end{abstract}

\section{Introduction}

Machine unlearning, or data deletion from models, refers to the problem of removing the influence of some data from a trained model without the computational expenses of completely retraining it from scratch \cite{caoOGMUL}. This research direction has become highly relevant in the last few years, due to increasing concern about user privacy and data security as well as the growing cost of retraining massive deep learning models on constantly updated datasets. For example, recent legislation that protects a user's ``Right to be Forgotten,” including the European Union’s General Data Protection
Regulation (GDPR), the California Consumer Privacy Act (CCPA), and the Canadian Consumer
Privacy Protection Act (CPPA), mandate that users be allowed to request removal of their personal data, which may be stored in databases or memorized by models \cite{sekharirememberwhatyouwant}. In addition, machine unlearning has practical implications for removing the influence of corrupted, outdated, or mislabeled data \cite{kurmanji2023towards, nguyen2022surveymachineunlearning}.

The typical goal of machine unlearning algorithms is to yield a model that resembles the model obtained from a full retrain on the updated dataset after data is removed. This requirement is formalized in the concept of ``certified unlearning," a strong theoretical guarantee motivated by differential privacy that probabilistically bounds the difference between the model weights returned by the unlearning and retraining algorithms \cite{Guo}. However, algorithms satisfying certified unlearning need to also be practical. For example, retraining the model from scratch on the retained dataset is a trivial unlearning algorithm that provides no efficiency gain. On the other hand, unlearning is also provably satisfied if the learning and unlearning algorithm both output weights randomly sampled from the same Gaussian distribution, but this would yield a poorly performing model. Therefore, an ideal algorithm optimally balances data deletion, model accuracy, and computation, also known as the ``privacy-utility-efficiency" tradeoff \cite{liutrilemma}. Moreover, because training from scratch is computationally expensive in the machine unlearning setting, we desire \textit{black-box unlearning algorithms}, which can be applied to pretrained models and do not require training with the intention of unlearning later.

Most certified unlearning algorithms are designed for convex or strongly convex functions \cite{Guo, neel21a, sekharirememberwhatyouwant}. Relaxing the convexity requirement is challenging since nonconvex functions do not have unique global minima. Recently, there have been several works that provide certified unlearning guarantees for nonconvex functions. For example, \cite{zhang2024towards}  proposes a single-step Newton update algorithm with convex regularization followed by Gaussian perturbation, inspired by existing second-order unlearning methods such as \cite{Guo, sekharirememberwhatyouwant}. Similarly, \cite{qiao2024efficientgeneralizablecertifiedunlearning} proposes a quasi-second-order method that exploits Hessian vector products to avoid directly computing the Hessian. Finally, \cite{chien2024langevin} proposes a first-order method in the form of projected noisy gradient descent with Gaussian noise added at every step. 

Prior work has focused on achieving theoretical unlearning guarantees in the nonconvex setting; however, practical unlearning algorithms should also be computationally efficient and convenient to use. First, \cite{zhang2024towards} and \cite{qiao2024efficientgeneralizablecertifiedunlearning} are both (quasi) second-order methods, but first-order methods that only require computing the gradient are more computationally efficient and require less storage. Second, \cite{chien2024langevin, qiao2024efficientgeneralizablecertifiedunlearning} are not black-box, since during the training process, \cite{chien2024langevin} requires injecting Gaussian noise at every step and \cite{qiao2024efficientgeneralizablecertifiedunlearning} requires storing a statistical vector for each data sample at each time step. These ``white-box" algorithms require significant changes to standard learning algorithms, which hinders easy implementation. See Table \ref{tab:comparingalgos} and Appendix \ref{sec:comparison} for an in-depth comparison with prior work.

In this work, we introduce ``rewind-to-delete" (R2D), a first-order, black-box, certified unlearning algorithm for general nonconvex functions. Our learning algorithm consists of vanilla gradient descent steps on the loss function of the dataset followed by Gaussian perturbation to the model weights. To remove data from the model, our unlearning algorithm ``rewinds" to a model checkpoint from earlier in the original training trajectory, followed by additional gradient descent steps on the \textit{new} loss function and Gaussian perturbation. The checkpoint can be saved during the training period (which is standard practice), or it can be computed post hoc from a pretrained model via the proximal point method. We prove $(\epsilon, \delta)$ certified unlearning for our algorithm and provide theoretical guarantees that explicitly address the ``privacy-utility-efficiency" tradeoff of our unlearning algorithm. Our algorithm is simple, easy to implement, and black-box, as it can be applied to a pretrained model as long as the model was trained with vanilla gradient descent.

\begin{table}[t]
\caption{Comparison of certified unlearning algorithms for convex and nonconvex functions, where $d$ is the dimension of the model parameters and $n$ is the size of the training dataset.}
\label{tab:comparingalgos}
\centering
\setlength{\tabcolsep}{4pt}
\begin{tabular}{p{4.2cm}p{2.35cm}p{2.78cm}p{1.15cm}>{\centering\arraybackslash}p{1.7cm}}
\toprule
\textbf{Algorithm} & \textbf{Loss Function} & \textbf{Method} & \textbf{Storage} & \textbf{Black-box?} \\
\midrule
Newton Step\cite{Guo} & Strongly convex & Second-order & $O(d^2)$ & $\times$  \\
Descent-to-Delete \cite{neel21a} & Strongly convex & First-order & $O(d)$ & $\surd$ \\
Langevin Unlearning \cite{chien2024langevin} & Nonconvex & First-order & $O(d)$ & $\times$  \\
Constrained Newton Step \cite{zhang2024towards} & Nonconvex & Second-order & $O(d^2)$ & $\surd$ \\
Hessian-Free Unlearning \cite{qiao2024efficientgeneralizablecertifiedunlearning} & Nonconvex & Quasi-second-order & $O(nd)$ & $\times$ \\
\textbf{Our Work (R2D)} & \textbf{Nonconvex} & \textbf{First-order} & \textbf{$O(d)$} & $\surd$ \\
\bottomrule
\end{tabular}
\vskip -0.25in
\end{table}

We also analyze the case of nonconvex functions that satisfy the Polyak-Łojasiewicz (PL) inequality. The PL inequality is a weak condition that guarantees the existence of a connected basin of global minima, to which gradient descent converges at a linear rate \cite{karimiPL}. This property allows us to derive empirical risk convergence and generalization guarantees, despite nonconvexity. PL functions are highly relevant to deep learning because overparameterized neural networks locally satisfy the PL condition in neighborhoods around initialization \cite{LIUPLoverparametrized}. 


Finally, we empirically demonstrate the privacy-utility-efficiency tradeoff of our unlearning algorithm and its superior performance compared to other algorithms for nonconvex functions, within a challenging new framework in which the unlearned data is not independently and identically distributed (i.i.d.) to the training or test set. Most  experiments select the unlearned samples either uniformly at random \cite{zhang2024towards, qiao2024efficientgeneralizablecertifiedunlearning} or all from one class \cite{Guo, golatkar1, kurmanji2023towards}, neither of which reflects unlearning in practice. For example, if we train a 10-class classifier and lose access to data from one class, we would not continue to use the same model on a 9-class dataset. In our approach, we instead train a  model to learn some global characteristics about a real-world dataset derived from a collection of users. Upon unlearning, we remove data associated with a subset of the users, thereby simulating realistic unlearning requests and their impact on model utility. To assess the extent of unlearning, we adapt  membership inference attack (MIA) methods \cite{ShokriSS16, chen_mia} to discriminate between the unlearned dataset and an \textit{out-of-distribution} dataset, constructed separately from  \textit{users} not present in the training data. Our results demonstrate that R2D outperforms other certified and non-certified methods, showing a greater decrease in performance on the unlearned data samples and defending successfully against both types of membership inference attacks (Tables \ref{tab:combined_auc_no_ood} and \ref{tab:combined_mia}). In particular, compared to the white-box algorithm \cite{qiao2024efficientgeneralizablecertifiedunlearning}, R2D is up to \textit{88} times faster during training, highlighting the advantage of our black-box approach (Tables \ref{tab:eicu_time} and \ref{tab:lacuna_time}). Ultimately, our first-order method enjoys the benefits of minimal computational requirements, easy off-the-shelf implementation, and competitive performance. While this work primarily addresses certified unlearning, our theoretical and empirical insights suggest that ``rewinding" is a more appropriate baseline for comparison with new unlearning algorithms for deep neural networks, as opposed to to standard baselines like finetuning or gradient ascent.

Our contributions are as follows.

\begin{itemize}
    \item We develop the first  ($\epsilon$, $\delta$) certified unlearning algorithm for nonconvex loss functions that is first-order and black-box. 
    
    \item We prove theoretical unlearning guarantees that demonstrate the privacy-utility-efficiency tradeoff of our algorithm, allowing for controllable noise at the expense of privacy or computation. For the special case of PL loss functions, we obtain linear convergence and generalization guarantees.

    \item We empirically demonstrate the superior performance of our algorithm compared to both certified and non-certified unlearning algorithms, within a novel experimental framework that better reflects real-world  scenarios for unlearning user data. 
     
\end{itemize}


\subsection{Related Work}

\textbf{Differential privacy.} Differential privacy (DP) \cite{dwork2006} is a well-established framework designed to protect individual privacy by ensuring that the inclusion or exclusion of any single data point in a dataset does not significantly affect the output of data analysis or modeling, limiting information leakage about any individual within the dataset. Specifically, a DP learning algorithm yields a model trained on some dataset that is probabilistically indistinguishable from a model trained on the same dataset with a data sample removed or replaced. The concept of $(\epsilon, \delta)$-privacy quantifies the strength of this privacy guarantee in terms of the privacy loss, $\epsilon$, and the probability of a privacy breach, $\delta$ \cite{dwork2014algorithmic}. This privacy can be applied during or after training by injecting controlled noise to the data, model weights \cite{wublackboxdp, zhangDPijcai2017p548}, gradients \cite{abadiDP, zhangDPijcai2017p548}, or objective function \cite{chaudhuri11aDP}, in order to mask information about any one sample in the dataset. However, greater noise typically corresponds to worse model performance, leading to a trade-off between utility and privacy.
The theory and techniques of differential privacy provide a natural starting point for the rigorous analysis of unlearning algorithms. 

As observed in \cite{wublackboxdp}, ``white-box" DP algorithms, which require code changes to inject noise at every training step, require additional development and runtime overhead and are challenging to deploy in the real world. Rather than adding noise at each iteration, \cite{wublackboxdp} and \cite{zhangDPijcai2017p548} propose DP algorithms that only perturb the output after training, which is easier to integrate into standard development pipelines. In similar fashion, our proposed black-box unlearning algorithm can be implemented without any special steps during learning with gradient descent. We also do not inject noise at each iteration, and we only perturb at the end of training. The difference between our approach and \cite{wublackboxdp, zhangDPijcai2017p548}, however, is that our approach can accommodate the nonconvex case, leveraging model checkpointing to control the distance from the retraining trajectory.

\textbf{Certified unlearning.} The term ``machine unlearning" was first coined by \cite{caoOGMUL} to describe a deterministic data deletion algorithm, which has limited application to general optimization problems. In the following years, techniques have been developed for ``exact unlearning," which exactly removes the influence of data, and ``approximate unlearning," which yields a model that is approximately close to the retrained model with some determined precision \cite{xusurvey}. Our work focuses on the latter. Both \cite{ginartmakingAI} and \cite{Guo} introduce a probabilistic notion of approximate unlearning, where the unlearning and retraining outputs must be close in distribution. Inspired by differential privacy, \cite{Guo} introduces the definition of certified $(\epsilon, \delta)$ unlearning used in this work. Like DP algorithms, certified unlearning algorithms typically involve a combination of empirical risk minimization and noise injection to the weights or objective function, which can degrade model performance. Moreover, unlearning algorithms are also designed to reduce computation, leading to a three-way trade-off between privacy, utility, and complexity.

Certified unlearning has been studied for a variety of  settings, including linear and logistic models \cite{Guo, izzo21a}, graph neural networks \cite{chien2022certifiedgraphunlearning}, minimax models \cite{liu2023certified}, and the federated learning setting \cite{fraboni24a}, as well as convex models \cite{sekharirememberwhatyouwant, neel21a, suriyakumar2022algorithms, chien2024stochastic} and nonconvex models \cite{chien2024langevin, qiao2024efficientgeneralizablecertifiedunlearning, zhang2024towards}. These algorithms can be categorized as first-order methods that only require access to the function gradients \cite{neel21a, chien2024stochastic} or second-order methods that leverage information from the Hessian to approximate the model weights that would result from retraining \cite{sekharirememberwhatyouwant, suriyakumar2022algorithms, zhang2024towards, qiao2024efficientgeneralizablecertifiedunlearning}. Our work is  inspired by the ``descent-to-delete" (D2D) algorithm \cite{neel21a}, a first-order unlearning algorithm for strongly convex functions that unlearns by fine-tuning with gradient descent iterates on the loss function of the retained samples.

\textbf{Nonconvex unlearning.}  There are also many approximate unlearning algorithms  for nonconvex functions that rely on heuristics or weaker theoretical guarantees. For example, \cite{bui2024newtonsmethodunlearnneural} proposes a ``weak unlearning" algorithm, which considers indistinguishability with respect to model output space instead of model parameter space. Another popular algorithm for neural networks is SCRUB \cite{kurmanji2023towards}, a gradient-based algorithm that balances maximizing error on the unlearned data and maintaining performance on the retained data. An extension of SCRUB, SCRUB+Rewind, ``rewinds" the algorithm to a point where the error on the unlearned data is ``just high enough," so as to impede membership inference attacks. Furthermore,
\cite{golatkar1, golatkar2} propose unlearning algorithms for deep neural networks, but they only provide a general upper bound on the amount of information retained in the weights rather than a strict certified unlearning guarantee. Additional approaches include subtracting out the impact of the unlearned data in each batch of gradient descent \cite{Graves_Nagisetty_Ganesh_2021}, gradient ascent on the loss function of the unlearned data \cite{jang-etal-2023-knowledge}, and retraining the last layers of the neural network on the retained data \cite{goel}. Ultimately, while the ideas of checkpointing, gradient ascent, and ``rewinding" have been considered in other machine unlearning works, our algorithm combines these elements in a novel manner to obtain strong theoretical guarantees that prior algorithms lack.

Notably, virtually all unlearning papers implement a noiseless "finetuning" baseline method which is based on the D2D framework. This is usually because finetuning is first-order and easy to implement, even though D2D only has theoretical guarantees on strongly convex functions (which are impossible to extend to nonconvex settings) and finetuning does not perform well empirically on deep neural networks. In contrast, R2D is equally easy to implement, has theoretical guarantees on nonconvex functions, and empirically outperforms certified and non-certified methods. Therefore, our work provides strong support that rewinding instead of "descending" is a more appropriate baseline method for comparison.

\section{Algorithm}
\label{sec:setup}
Let $\mathcal{D} = \{z_1,..., z_n \}$ be a training dataset of $n$ data points drawn independently and identically distributed from the sample space $\mathcal{Z}$, and let $\Theta$ be the (potentially infinite) model parameter space. Let $A : \mathcal{Z}^n \to \Theta$ be a (randomized) learning algorithm that trains on $\mathcal{D}$ and outputs a model with weight parameters $\theta \in \Theta$, where $\Theta$ is the (potentially infinite) space of model weights. Typically, the goal of a learning algorithm is to minimize $f_\mathcal{D}(\theta)$, the empirical loss on $\mathcal{D}$, defined as 
$f_{\mathcal{D}}(\theta) = \frac{1}{n} \sum_{i = 1}^n f_{z_i}(\theta),$ where $f_{z_i}(\theta)$ represents the loss on the sample $z_i$. 

Let us ``unlearn" or remove the influence of a subset of data $Z \subset \mathcal{D}$ from the output of the learning algorithm $A(\mathcal{D})$. Let $\mathcal{D'} = \mathcal{D} \backslash Z$, and we denote by $U(A(\mathcal{D}), \mathcal{D}, Z)$ the output of an unlearning algorithm $U$. The goal of the unlearning algorithm is to output a model parameter that is probabilistically \textit{indistinguishable} from the output of $A(\mathcal{D'})$. This is formalized in the concept of $(\epsilon, \delta)$-indistinguishability, which is  used in the DP literature to characterize the influence of a data point on the model output \cite{dwork2014algorithmic}.

\textbf{Definition 2.1.} \cite{dwork2014algorithmic, neel21a} Let $X$ and $Y$ be random variables over some domain $\Omega$. We say $X$ and $Y$ are $(\epsilon, \delta)$-indistinguishable if for all $S \subseteq \Omega$,
\begin{align*}
    \mathbb{P}[X \in S] \leq& e^{\epsilon} \mathbb{P}[Y \in S] + \delta,\\
    \mathbb{P}[Y \in S] \leq& e^{\epsilon} \mathbb{P}[X \in S] + \delta.
\end{align*}
In the context of differential privacy, $X$ and $Y$ are the learning algorithm outputs on neighboring datasets that differ in a single sample. $\epsilon$ is the privacy loss or budget, which can be interpreted as a limit on the amount of information about an individual that can be extracted from the model, whereas $\delta$ accounts for the probability that these privacy guarantees might be violated. Definition 2.1 extends naturally to Definition 2.2, the definition of $(\epsilon, \delta)$ certified unlearning.

\textbf{Definition 2.2.} Let $Z \subset \mathcal{D}$ denote the data samples we would like to unlearn. Then $U$ is an $(\epsilon, \delta)$ certified unlearning algorithm for $A$ if for all such $Z$, $U(A(\mathcal{D}), \mathcal{D}, Z)$ and $A(\mathcal{D} \backslash Z )$ are $(\epsilon, \delta)$-indistinguishable.

Next, we describe our "rewind-to-delete" (R2D) algorithms for machine unlearning. The learning algorithm $A$ (Algorithm \ref{alg:learn1}) performs gradient descent updates on $f_\mathcal{D}$ for $T$ iterations, and the iterate at the $T-K$ time step is saved as a checkpoint or computed post hoc via the proximal point algorithm (Algorithm \ref{alg:prox}). Then Gaussian noise is added to the final parameter $\theta_T$, and the perturbed parameter is used for model inference. When a request is received to remove the data subset $Z$, the checkpointed model parameter $\theta_{T-K}$ is loaded as the initial point of the unlearning algorithm $U$ (Algorithm \ref{alg:unlearn1}). Then we perform $K$ gradient descent steps on the \textit{new} loss function $f_\mathcal{D'}$ and add Gaussian noise again to the final parameter, using the perturbed weights for future model inference.

\begin{minipage}[t]{0.48\textwidth}
\begin{algorithm}[H]
\caption{\ensuremath{A}: R2D Learning Algorithm}\label{alg:learn1}
\begin{algorithmic}
\REQUIRE dataset $\mathcal{D}$, initial point $\theta_0 \in \Theta$ 
\FOR{t = 1, 2, ..., T}
\STATE $\theta_t = \theta_{t-1} - \eta \nabla f_\mathcal{D}(\theta_{t-1})$ 
\ENDFOR
\STATE Save checkpoint $\theta_{T-K}$ \textbf{or} compute $\theta_{T-K}$ via Algorithm 3
\STATE Sample $\xi \sim \mathcal{N}(0, \sigma^2 \mathbb{I}_d)$
\STATE $\tilde{\theta} = \theta_T + \xi$ 
\STATE Use $\tilde{\theta}$ for model inference
\STATE Upon receiving an unlearning request, call Algorithm 2
\end{algorithmic}
\end{algorithm}
\end{minipage}%
\hfill
\begin{minipage}[t]{0.48\textwidth}
\begin{algorithm}[H]
\caption{$U$: R2D Unlearning Algorithm}\label{alg:unlearn1}
\begin{algorithmic}
\REQUIRE dataset $\mathcal{D'}$, model checkpoint $\theta_{T-K}$
\STATE $\theta''_0 = \theta_{T-K}$
\FOR{t = 1, ..., K}
\STATE $\theta''_t = \theta''_{t-1} - \eta \nabla f_{\mathcal{D}'}(\theta''_{t-1})$ 
\ENDFOR
\STATE Sample $\xi \sim \mathcal{N}(0, \sigma^2 \mathbb{I}_d)$
\STATE $\tilde{\theta}'' = \theta''_{K} + \xi$ 
\STATE Use $\tilde{\theta}''$ for model inference
\end{algorithmic}
\end{algorithm}
\end{minipage}

\begin{wrapfigure}{r}{0.49\textwidth}
\vspace{-55pt} 
\begin{minipage}{0.49\textwidth}
\begin{algorithm}[H]
\caption{Compute Checkpoint via Proximal Point Method}\label{alg:prox}
\begin{algorithmic}
\REQUIRE datasets $\mathcal{D}$, model checkpoint $\theta_T$
\STATE $w_0 = \theta_{T}$
\FOR{t = 1, ..., K}
\STATE $w_t = \arg \min_x \{ -f_{\mathcal{D}}(x) + \frac{1}{2\eta} \lVert x - w_{t-1} \rVert^2 \}$
\ENDFOR

\RETURN  $w_{K}$
\end{algorithmic}
\end{algorithm}
\vspace{-10pt}
\end{minipage}
\end{wrapfigure}

When a model is trained without a checkpoint saved, we can still obtain a black-box unlearning algorithm by carefully undoing the gradient descent training steps via the proximal point method, outlined in Algorithm \ref{alg:prox}. We can solve for previous gradient descent iterates through the following implicit equation (\ref{eq:implicit}):
\begin{align}
    \theta_{t+1} &= \theta_t - \eta \nabla f_{\mathcal{D}}(\theta_t) \\
    \theta_{t} = &\theta_{t+1} + \eta \nabla f_{\mathcal{D}}(\theta_t).
    \label{eq:implicit}
\end{align}
The backward Euler update in (\ref{eq:implicit}) is distinct from a standard gradient ascent step, which is a forward Euler method. Instead, we compute $\theta_t$ from $\theta_{t+1}$ by taking advantage of the connection between backward Euler for gradient flow and the proximal point method, an iterative algorithm for minimizing a convex function \cite{Martinet_1970prox}. Let $g(\theta)$ denote a convex function. The proximal point method minimizes $g(\theta)$ by taking the proximal operator with parameter $\gamma$ of the previous iterate, defined as follows
$$\theta_{k+1} = prox_{g, \gamma}(\theta_k) = \arg \min_x \{ g(x) + \frac{1}{2 \gamma} ||x - \theta_k||^2 \} .$$
Although for our problem, $f$ is nonconvex, adding sufficient regularization produces a convex and globally tractable proximal point subproblem, stated in Lemma \ref{lemma:convex}. Therefore, by computing the proximal operator with respect to $-f(\theta)$, we can solve the implicit gradient ascent step.
\begin{lemma}
\label{lemma:proxproof}
    Suppose $f(\theta)$ is continuously differentiable, $\theta_t$ is defined as in (\ref{eq:implicit}), and let $\eta < \frac{1}{L}$. Then $\theta_t = prox_{-f, \eta} (\theta_{t+1})$.
\end{lemma}


If $\eta < \frac{1}{L}$, then due to strong convexity, we can solve the proximal point subproblem easily via gradient descent or Newton's method. Computationally, when $K \ll T$, the algorithm is comparable to other second-order unlearning methods that require a single Newton step. In addition, we only need to compute the model checkpoint once prior to unlearning requests, so this computation can be considered ``offline" \cite{izzo21a, qiao2024efficientgeneralizablecertifiedunlearning}. In Appendix \ref{sec:prox_exp}, we test the ability of Algorithm \ref{alg:prox} to reconstruct prior training steps in practice (Table \ref{tab:proxpoint_reconstruct}), and we measure the additional computation time required (Tables \ref{tab:prox_eicu} and \ref{tab:prox_lacuna}). We measure the Euclidean distance between the original checkpoint and the reconstructed checkpoint, observing that we can quite faithfully reconstruct checkpoints for $K < T/2$, but beyond this point we encounter instability issues due to compounding approximation errors. 

In general, our work shares similarities with Descent-to-Delete (D2D), which "descends" from the trained model $\theta_T$ instead of from an earlier checkpoint $\theta_{T-K}$. However, extending the D2D analysis beyond the strongly convex setting is impossible since it relies on the existence of a unique global minimum, which (i) attracts training trajectories and (ii) remains in a small neighborhood when the underlying loss function is changed. Neither (i) nor (ii) hold for the general nonconvex setting, where we may only converge to a local minima or saddle point. Our novel insight is we leverage rewinding instead of "descending" to bring the unlearned model closer to the retrained model by reversing the divergence in training trajectories caused by the unlearning bias.

\section{Analyses}
\label{sec:main}
\textbf{The proofs of all theoretical results can be found in Appendix \ref{sec:allproofs}.}
In the following theorem, we establish the unlearning and performance guarantees for our algorithm on nonconvex functions. For nonconvex functions, gradient descent might converge to local minima or saddle points, so we measure the performance by the average of the gradient norm over iterates, a common DP performance metric for algorithms on nonconvex functions.

\begin{theorem}
\label{theorem:unlearning} Let $\epsilon, \delta$ be fixed such that $0 <\epsilon \leq 1$ and $\delta > 0$.
Suppose for all $z \in \mathcal{Z}$, the loss function $f_z$ is $L$-Lipschitz smooth and the gradient is uniformly bounded by some constant $G$ so that $\lVert \nabla f_z (\theta) \rVert < G$ for all $\theta \in \Theta$. Let $\mathcal{D}$ denote the original dataset of size $n$, let $Z \subset \mathcal{D} $ denote the unlearned dataset of size $m$, and let $\mathcal{D}' = \mathcal{D}\backslash Z$ denote the retained dataset. 
Let the learning algorithm $A$ be initialized at $\theta_0$ and run for $T$ iterations with step size $\eta \leq \min \{\frac{1}{L}, \frac{n}{2 (n - m)L}\}$. Let the standard deviation $\sigma$ of the Gaussian noise be defined as
\begin{equation}
\label{eq:sigma}
    \sigma  = \frac{2 m G\cdot h(K) \sqrt{2 \log(1.25/\delta)}}{ Ln \epsilon},
\end{equation}
where $h(K)$ is a function that monotonically decreases to zero as $K$ increases from $0$ to $T$ defined by
    $$h(K) = ((1 +  \frac{\eta L n}{n - m})^{T-K} - 1) (1 + \eta L)^K.$$
Then $U$ is an $(\epsilon, \delta)$-unlearning algorithm for $A$ with noise $\sigma$. In addition, 
\begin{align}
\label{eq:nonconvexperformance}
   \frac{1}{T}(\sum_{t=0}^{T- K - 1} \lVert \nabla f_{\mathcal{D'}}(\theta_t) \rVert^2 +  \sum_{t=0}^{K-1} \lVert \nabla f_{\mathcal{D'}}(\theta''_t) \rVert ^2 + \mathbb{E}[\lVert \nabla f_{\mathcal{D'}}(\tilde{\theta}'')\rVert^2 ]) \nonumber \\
    \leq O(\sigma^2 + \frac{n}{T (n-m)} + \frac{(T-K-1)m}{T(n-m)}),    
\end{align}

where the expectation is taken with respect to the Gaussian noise added at the end of $U$, and where $O(\cdot)$ hides dependencies on $\eta$, $G$, and $L$.

\end{theorem}

\begin{corollary}
    \label{thm:choosingK} For fixed $\sigma$ and $\delta$, the dependence of $K$ on $\epsilon$ in (\ref{eq:sigma}), denoted as $K(\epsilon)$, is bounded as follows:
    \begin{equation}
    \label{eq:K}
        K(\epsilon) \leq (\log(1 +  \frac{\eta Ln}{n-m}))^{-1} \log( (1 +  \frac{\eta Ln}{n-m})^T - \frac{\sigma L n \epsilon }{2 m G \sqrt{2 \log (1.25 / \delta)}} ).
    \end{equation}
\end{corollary}

Equation (\ref{eq:nonconvexperformance}) states that the average of the gradient norm squared of the initial $T-K$ learning iterates, the $K-1$ unlearning iterates after, and the last perturbed iterate $\tilde{\theta}''$ decreases with increasing $T$ and $n$, indicating that the algorithm converges to a stationary point with small gradient norm. Corollary \ref{thm:choosingK} provides an upper bound on $K(\epsilon)$, such that in practice we can choose $K$ equal to this bound to ensure the privacy guarantee is achieved.

The analysis relies on carefully tracking the distance between the unlearning iterates and gradient descent iterates on $f_{\mathcal{D'}}$. Like prior work in certified unlearning, our analysis relies on the Gaussian mechanism for differential privacy \cite{dwork2014algorithmic}, which implies that as long as the distance between the trajectories is bounded, we can add a sufficient amount of Gaussian noise to make the algorithm outputs ($\epsilon$, $\delta$)-indistinguishable. We therefore can compute the noise level $\sigma$ required to achieve unlearning. For the utility guarantees, we leverage the fact that gradient descent steps on $f_{\mathcal{D}}$ also make progress on $f_{\mathcal{D'}}$ to obtain bounds that only depend on problem parameters and $\theta_0$.

Theorem \ref{theorem:unlearning} underscores the ``privacy-utility-complexity" tradeoff between our measure of unlearning, $\epsilon$, noise, $\sigma$, and  the number of unlearning iterations, $K$. By construction, $K < T$, so our algorithm is more efficient than retraining. We can pick larger $K$ such that the noise required is arbitrarily small at the expense of computation, and when $K = T$ our algorithm becomes a noiseless full retrain. Moreover, the standard deviation $\sigma$ inversely scales with the size of the dataset $n$, implying that unlearning on larger datasets require less noise. In contrast, \cite{zhang2024towards} and \cite{qiao2024efficientgeneralizablecertifiedunlearning} do not feature such data-dependent guarantees. In practice, following the guidelines in \cite{zhang2024towards}, we suggest first choosing $\sigma$ that preserves model utility (in our work, we choose $\sigma = 0.01$) and $K$ within the computational budget, such as $10 \% \times T$. From there, one can compute the level of privacy achieved. Our experiments for various $\sigma$, including $\sigma = 0$, (Table \ref{tab:eicu_nn_full} and \ref{tab:lacuna_nn_full}) suggest that a small amount of rewinding or noise performs well empirically.

Theorem \ref{theorem:unlearning} applies to unlearning a batch of $m$ data samples, but 
our algorithm also accommodates sequential unlearning requests. If, after unlearning $m$ points, an additional $k$ unlearning requests arrive, we simply call the unlearning algorithm $M$ on the new retained dataset of size $n - m - k$. Notably, if the total number of unlearned data increases while $\sigma$ and $K$ stay constant, our unlearning guarantee worsens, which aligns with other results \cite{Guo, zhang2024towards, chien2024langevin}.

We can obtain faster convergence for Polyak-Lojasiewicz (PL) functions, which are functions that satisfy the PL inequality (\ref{eq:PL}). Although PL functions can be nonconvex, they have a continuous basin of global minima, enabling both empirical and population risk bounds. 

\textbf{Definition 3.3. } \cite{karimiPL} For some function $f$, suppose it attains a global minimum value $f^*$. Then $f$ satisfies the PL inequality if for some $\mu > 0$ and all $x$,
\begin{equation}
\label{eq:PL}
    \frac{1}{2} ||\nabla f(x)||^2 \geq \mu (f(x) - f^*).
\end{equation}
Although practical algorithms typically minimize the empirical risk, the ultimate goal of learning is to minimize the population risk $F$, in order to determine how well the model will generalize on unseen test data. We leverage results from \cite{lei2021sharper} that relate the on-average stability bounds of algorithms on PL functions to their excess population risk.

\begin{corollary}
\label{cor:generalization}
Suppose the conditions of Theorem \ref{theorem:unlearning} hold and in addition, $f_{\mathcal{D}'}$ satisfies the PL condition (\ref{eq:PL}) with parameter $\mu$. Let $F$ represent the population risk, defined as $F(\theta) = \mathbb{E}_{z \sim \mathcal{Z}} [f_z(\theta)]$, and let $F^*$ represent its global optimal value.
Then we have
\begin{equation*}
\begin{aligned}
\mathbb{E}[F(\tilde{\theta}'')] - F^* \leq& L \sqrt{d}\sigma + \frac{2G^2}{(n-m) \mu}  \\
&+ \frac{L}{2 \mu}(1 -\eta \mu)^K [(1 - \frac{\eta \mu (n - m)}{n} )^{T-K}(f_{\mathcal{D'}}(\theta_{0}) - f_{\mathcal{D'}}^*)  
    +  \frac{Gm(G + L \eta)}{ \mu (n - m)}]
\end{aligned}
\end{equation*}
where $\sigma$ is defined in (\ref{eq:sigma}) and the expectation is taken with respect to  i.i.d. sampling of $\mathcal{D} \sim \mathcal{Z}$ and the Gaussian noise added at the end of $U$.
\end{corollary}
Our performance guarantees demonstrate that more learning iterates $T$ correspond to better performance upon unlearning for fixed $\sigma$. For PL functions, the utility converges faster with increasing $T$ than for the general nonconvex case.

\section{Experiments}
See Appendix \ref{sec:experiments}  for an in-depth review of experimental details, including  hyperparameters and hardware (\ref{sec:implementation_details}), implementation details of MIAs (\ref{sec:MIA_deets}) and baseline methods (\ref{sec:baselines}), computation time experiments (\ref{sec:computation}), proximal point method (Algorithm \ref{alg:prox}) experiments (\ref{sec:prox_exp}), additional numerical results (\ref{sec:numerical_results}), and the \href{https://github.com/siqiaomu/r2d}{GitHub repository}.

\subsection{Setup}
\textbf{Experimental Framework. }We test Algorithms \ref{alg:learn1} and \ref{alg:unlearn1} in a novel  setting where the unlearned dataset is not i.i.d. to the training or test dataset. For all experiments, we train a binary classifier with the cross-entropy loss function over a dataset that is naturally split among many different users. To test unlearning, we remove the data associated with a subset of the users ($1$\%-$2$\% of the data) and observe the impact on the original classification task. Our framework better reflects unlearning in practice, where users may request the removal of their data but they do not each represent a class in the model. This stands in contrast to current experimental approaches that unlearn data from a selected class \cite{Guo, golatkar1, kurmanji2023towards}, or randomly select samples uniformly from the dataset to unlearn \cite{zhang2024towards, qiao2024efficientgeneralizablecertifiedunlearning}. 

\textbf{Datasets and Models.} We consider two real-world datasets and neural network models with highly \textit{nonconvex} loss functions. For  small-scale experiments, we train a multilayer perceptron (MLP) with 3 hidden layers to perform classification on the eICU dataset, a large multi-center intensive care unit (ICU) database consisting of tabular data on ICU admissions \cite{Pollard_Johnson_Raffa_Celi_Mark_Badawi_2018}.  Each patient is linked with 1-24 hospital stays. We predict if the length of a hospital stay of a patient is longer or shorter than a week using the intake variables of the Acute Physiology Age Chronic Health Evaluation (APACHE) predictive framework, including blood pressure, body temperature, and age. For unlearning, we remove a random subset of patients and their corresponding data. For large-scale experiments, we consider a subset of the VGGFace2 dataset, which is composed of approximately $9,000$ celebrities and their face images from the internet \cite{Cao18}. We apply the MAAD-Face annotations from \cite{TerhMaaD} to label each celebrity as male or female, and sample a class balanced dataset of 100 celebrities to form the Lacuna-100 dataset as described in \cite{golatkar1}. We train a ResNet-18 model to perform binary gender classification. For unlearning, we remove a random subset of the celebrities and their face images.

\textbf{Implementation.} During the learning process, we train the model and save checkpoints every 10 epochs. Upon unlearning, we revert to earlier checkpoints and train on the new loss function. For each version of R2D, we compute the "rewind percent" as $\frac{K}{T} \times 100\%$, or the number of unlearning steps $K$ as a  fraction of the number of original training steps $T$. Due to the computational demands of full-batch gradient descent, we implement our algorithms using mini-batch gradient descent with a very large batch size ($2048$ for eICU and $512$ for Lacuna-100). For $(\epsilon, \delta)$-unlearning, we utilize the bound derived in \cite{pmlr-v80-balle18a} to calibrate the noise for $\epsilon > 1$, and we estimate the values $L$ and $G$ using sampling approaches outlined in Appendix \ref{sec:implementation_details}.

\textbf{Unlearning Metrics.}
To empirically evaluate the unlearned model, we apply membership inference attacks (MIA) to attempt to distinguish between unlearned data and data that has never been in the training dataset. A lower Area Under the Receiver Operating Characteristic Curve (AUC) of the membership attack model corresponds to less information retained in the model and more successful unlearning. We employ both the classic MIA \cite{ShokriSS16}, that only considers the output of the model after unlearning, and an advanced attack tailored to the unlearning setting (MIA-U) \cite{chen_mia}, that compares the output of the original model and the unlearned model. These attacks are typically performed on i.i.d. datasets, but we adapt them to our  setting by constructing an \textit{out-of-distribution} (OOD) dataset representing data from \textit{users} absent from the training data. For Lacuna-100, we construct an OOD dataset using an additional $100$ users from the VGGFace2 database. For eICU, we construct the OOD dataset from data samples in the test set belonging to users not present in the training set. 

We also consider the performance (AUC) on the retain set, unlearned set, and test set (denoted as $\mathcal{D}_{retain}$, $\mathcal{D}_{unlearn}$, $\mathcal{D}_{test}$) of both the original trained model and the model after unlearning (Table \ref{tab:combined_auc_no_ood}). A decrease in performance on $\mathcal{D}_{unlearn}$ suggests the model is losing information about the data. 

\textbf{Baseline Methods.} We compare against two other certified unlearning algorithms for nonconvex functions: \textbf{Constrained Newton Step  (CNS)} \cite{zhang2024towards}, a black-box algorithm which involves a single Newton step within a constrained parameter set, and  \textbf{Hessian-Free Unlearning (HF)} \cite{qiao2024efficientgeneralizablecertifiedunlearning}, a white-box algorithm which involves demanding pre-computation and storage of data influence vectors during training, allowing unlearning via simple vector addition later. Both methods also involve a single Gaussian perturbation at the end to achieve $(\epsilon, \delta)$-unlearning. We do not implement the white-box algorithm from \cite{chien2024langevin} because, as stated in their work, "the non-convex unlearning bound... is not
tight enough to be applied in practice due to its exponential dependence on various hyperparameters."  For all baselines methods, we use the hyperparameters reported in the original papers that align with our settings. However, we note that certified unlearning algorithms can be highly sensitive to hyperparameter choices, so for a fair comparison we also consider a flat noise amount of $\sigma = 0.01$, following the precedent established in \cite{qiao2024efficientgeneralizablecertifiedunlearning, zhang2024towards, Guo}. Moreover, this allows us to compare against several state-of-the-art baseline methods which do not have theoretical unlearning guarantees but can be evaluated empirically: \textbf{Finetune}, where the model is fine-tuned on the retained dataset, \textbf{Fisher Forgetting} \cite{golatkar1}, which selectively perturbs weights via the Fisher information matrix, and \textbf{SCRUB} \cite{kurmanji2023towards}. Unlike the certified unlearning methods, these algorithms do not require any specific training procedure, so we apply them to the same original trained model used for R2D. 

\begin{figure}[t]
\centering
\includegraphics[width=\linewidth]{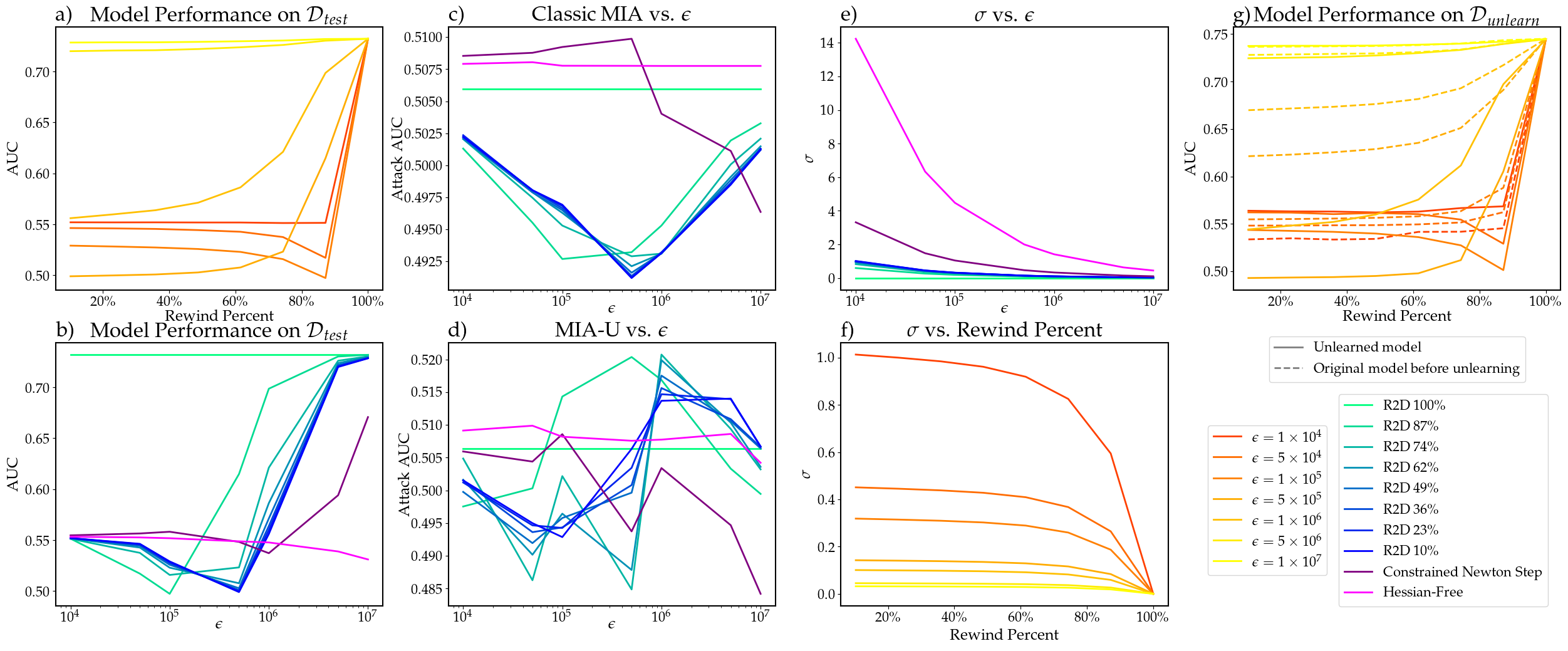}

\caption{Privacy-utility-complexity tradeoff of R2D compared to other certified unlearning methods (Constrained Newton Step and Hessian-Free method) on the eICU dataset. In Figures 1a, 1f, and 1g, we plot against the rewind percent, computed as $\frac{K}{T} \times 100 \%$.}
\label{fig:puc}
\end{figure}

\subsection{Results}

In Figure \ref{fig:puc}, we compare R2D with varying $\epsilon$ and $K$ to other certified unlearning algorithms on the eICU dataset. Figures \ref{fig:puc}a and \ref{fig:puc}b demonstrate that model performance (AUC) on $\mathcal{D}_{test}$ improves with more rewinding and larger $\epsilon$, which correspond to smaller $\sigma$ as shown from the result derived in (\ref{eq:sigma}). Figure \ref{fig:puc}b shows that R2D has a superior privacy-utility tradeoff, with better performance on $\mathcal{D}_{test}$ due to a smaller noise requirement $\sigma$ that scales more advantageously with $\epsilon$, as shown in Figure \ref{fig:puc}e. In addition, for most values of $\epsilon$, R2D is able to defend more successfully against the membership attacks, demonstrated by lower MIA scores in Figure \ref{fig:puc}c and \ref{fig:puc}d. Finally, Figure \ref{fig:puc}g displays the performance of the model before and after unlearning on $\mathcal{D}_{unlearn}$, with the decrease in performance after unlearning indicating that the model is losing information about the samples. 

We further compare R2D to certified and non-certified algorithms in Tables \ref{tab:combined_auc_no_ood} and \ref{tab:combined_mia}, with the full numerical results available in Tables \ref{tab:eicu_flat_full} and \ref{tab:lacuna_flat_full} in Appendix \ref{sec:numerical_results}. Table \ref{tab:combined_auc_no_ood} displays the model AUC on $\mathcal{D}_{retain}$, $\mathcal{D}_{unlearn}$, and $\mathcal{D}_{test}$ before and after unlearning, and Figure \ref{fig:modelperform} plots these values over rewind percent. We note that there are different "original models" because the certified baseline methods (HF, CNS) require different training procedures. On both datasets, R2D displays a significantly greater performance drop on $\mathcal{D}_{unlearn}$ compared to non-certified methods, while outperforming CNS on eICU and both HF and CNS on Lacuna-100. We observe that as expected, more rewinding decreases the utility on $\mathcal{D}_{unlearn}$ (Figure \ref{fig:modelperform}) and reduces the success of the MIA (Figure \ref{fig:full_mia_nn_flat} in the Appendix). Finally, Table \ref{tab:combined_mia} displays the results of both MIA methods. On the eICU dataset, R2D outperforms all other methods under the classic MIA and outperforms other certified methods under the MIA-U, and on the Lacuna-100 dataset, R2D is competitive under the classic MIA while outperforming all other methods under the MIA-U. Our results suggest that even with a small amount of perturbation, certified unlearning methods tend to defend against MIAs more successfully than non-certified methods. Finally, Tables \ref{tab:eicu_time} and \ref{tab:lacuna_time} in Appendix \ref{sec:computation}  demonstrate that R2D is more computationally efficient than HF and CNS.

\begin{table}[t]
    \centering
    \caption{Model performance (AUC) before and after unlearning. "Original Models" refers to the trained models before unlearning, and all other rows are unlearned models. We consider the relative decrease of the AUC on $\mathcal{D}_{unlearn}$  as an unlearning metric. The non-certified methods are applied to the R2D original model. We \textbf{bold} the two best results for each dataset.}
    \small
    \begin{tabular}{>{\raggedright\arraybackslash}p{3.95cm} 
                    >{\centering\arraybackslash}p{0.85cm} 
                    >{\centering\arraybackslash}p{1.75cm} 
                    >{\centering\arraybackslash}p{0.85cm} 
                    >{\centering\arraybackslash}p{0.85cm} 
                    >{\centering\arraybackslash}p{1.75cm} 
                    >{\centering\arraybackslash}p{0.85cm}}
    \toprule
     & \multicolumn{3}{c}{eICU} & \multicolumn{3}{c}{Lacuna-100} \\
    \cmidrule(lr){2-4} \cmidrule(lr){5-7}
    Algorithm & $\mathcal{D}_{retain}$ & $\mathcal{D}_{unlearn}$ & $\mathcal{D}_{test}$ & $\mathcal{D}_{retain}$ & $\mathcal{D}_{unlearn}$ & $\mathcal{D}_{test}$ \\
    \midrule
    \textbf{Original Models} \tiny{(no noise)} \\
    Hessian-Free & 0.7490 & 0.7614 & 0.7472 & 0.9851 & 0.9746 & 0.9737 \\
    Constrained Newton Step & 0.7628 & 0.7860 & 0.7602 & 0.9983 & 0.9956 & 0.9856 \\
    R2D & 0.7337 & 0.7451 & 0.7322 & 0.9989 & 0.9993 & 0.9844 \\
    \midrule
    \textbf{Certified Methods {\tiny($\sigma = 0.01$)}} \\
    Hessian-Free & 0.7476 & \textbf{0.7587 {\tiny(-0.0027)} }& 0.7465 & 0.9757 & 0.9555 {\tiny (-0.0191)} & 0.9652 \\
    Constrained Newton Step & 0.7622 & 0.7848 {\tiny(-0.0012)} & 0.7601 & 0.9977 & 0.9949 {\tiny (-0.0007)} & 0.9847 \\
    R2D 7-10\% & 0.7335 & 0.7444 {\tiny(-0.0007)} & 0.7327 & 0.9606 & 0.9238 {\tiny(-0.0755)} & 0.9478 \\
    R2D 36-37\% & 0.7334 & 0.7441 {\tiny(-0.0010)} & 0.7327 & 0.9683 & \textbf{0.9269 {\tiny(-0.0724)}} & 0.9525 \\
    R2D  74\% & 0.7333 & \textbf{0.7439 {\tiny(-0.0012)}} & 0.7326 & 0.9712 & \textbf{0.8998 {\tiny(-0.0995)}} & 0.9534 \\
    \midrule
    \textbf{Noiseless Retrain} (R2D 100\%)  & 0.7335 & 0.7447 {\tiny(-0.0004)} & 0.7321 & 1.0000 & 0.9460 {\tiny(-0.0533)} & 0.9840 \\
    \midrule
    \textbf{Non-certified Methods} \\
    Finetune & 0.7352 & 0.7463 {\tiny(+0.0012)} & 0.7337 & 0.9997 & 0.9975 {\tiny(-0.0018)} & 0.9854 \\
    Fisher Forgetting & 0.7310 & 0.7482 {\tiny(+0.0031)} & 0.7302 & 0.9982 & 0.9986 {\tiny(-0.0007)} & 0.9831 \\
    SCRUB & 0.7336 & 0.7450 {\tiny(-0.0001)} & 0.7322 & 0.9989 & 0.9999 {\tiny(+0.0006)} & 0.9845 \\
    \bottomrule
    \end{tabular}
    \label{tab:combined_auc_no_ood}
\end{table}

\section{Conclusion}
\label{sec:conclusion}
We propose R2D, the first black-box, first-order certified-unlearning algorithm for nonconvex functions, addressing theoretical and practical limitations of prior work. Our algorithm outperforms existing certified and non-certified methods in storage, computation, accuracy, and unlearning.

\begin{figure}[H]

    \begin{minipage}[h]{0.6\textwidth}
        \centering
        \captionof{table}{Membership inference attack success (AUC). We \textbf{bold} the two best results for each dataset and attack method.}
        \small
        \begin{tabular}{>{\raggedright\arraybackslash}p{2.277cm} 
                        >{\centering\arraybackslash}p{1.65cm} 
                        >{\centering\arraybackslash}p{0.99cm} 
                        >{\centering\arraybackslash}p{1.65cm} 
                        >{\centering\arraybackslash}p{0.99cm}}
        \toprule
         & \multicolumn{2}{c}{eICU} & \multicolumn{2}{c}{Lacuna-100} \\
        \cmidrule(lr){2-3} \cmidrule(lr){4-5}
        \textbf{Algorithm} & \textbf{Classic MIA} & \textbf{MIA-U} & \textbf{Classic MIA} & \textbf{MIA-U} \\
        \midrule
        \textbf{Certified \tiny($\sigma = 0.01$)} \\
        HF & 0.5078 & 0.5108 & \textbf{0.4950} & 0.8460 \\
        CNS & 0.5145 & 0.5144 & 0.6020 & 0.8480 \\
        R2D 7-10\% & 0.5047 & 0.5001 & 0.5625 & 0.7721 \\
        R2D 36-37\% & \textbf{0.5046} & \textbf{0.5001} & 0.5197 & \textbf{0.6779} \\
        R2D 74\% & \textbf{0.5044 }& \textbf{0.4997} & 0.5333 & \textbf{0.7206} \\
        \midrule
        \textbf{Noiseless Retrain} & 0.5060 & 0.5064 & \textbf{0.4974} & 0.8101 \\
        \midrule
        \textbf{Non-certified} \\
        
        Finetune & 0.5063 & 0.5066 & 0.6013 & 0.8497 \\
        Fisher Forgetting & 0.5114 & 0.5102 & 0.5814 & 0.8735 \\
        SCRUB & 0.5060 & 0.5056 & 0.6179 & 0.8586 \\
        \bottomrule
        \end{tabular}

        \label{tab:combined_mia}
    \end{minipage}%
    \hspace{0.6in}
    \begin{minipage}[h]{0.3\textwidth}
        \raggedleft
        \includegraphics[width=\textwidth]{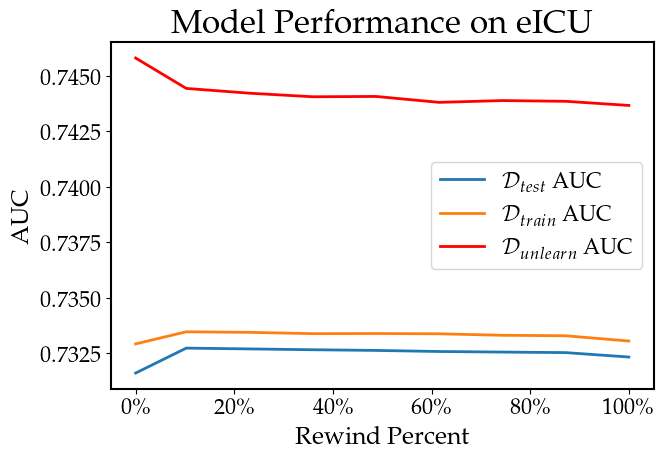}
        \includegraphics[width=\textwidth]{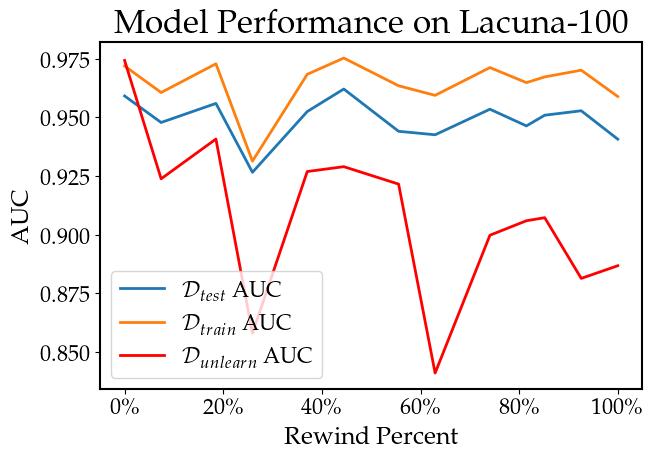}
        \caption{Model performance vs. rewinding for $\sigma = 0.01$.}
        \label{fig:modelperform}
    \end{minipage}
\end{figure}

\pagebreak

\printbibliography 

@misc{alpaca,
  author       = {Rohan Taori and Ishaan Gulrajani and Tianyi Zhang and Yann Dubois and Xuechen Li and Carlos Guestrin and Percy Liang and Tatsunori B. Hashimoto},
  title        = {Stanford Alpaca: An Instruction-following LLaMA Model},
  year         = {2023},
  howpublished = {\url{https://github.com/tatsu-lab/stanford_alpaca}},
  note         = {Stanford Center for Research on Foundation Models (CRFM)}
}

@inproceedings{abadiDP,
author = {Abadi, Martin and Chu, Andy and Goodfellow, Ian and McMahan, H. Brendan and Mironov, Ilya and Talwar, Kunal and Zhang, Li},
title = {Deep Learning with Differential Privacy},
year = {2016},
booktitle = {Proceedings of the 2016 ACM SIGSAC Conference on Computer and Communications Security},
pages = {308–318},
numpages = {11},
keywords = {differential privacy, deep learning},
location = {Vienna, Austria},
series = {CCS '16}
}

@InProceedings{pmlr-v80-balle18a,
  title = 	 {Improving the {G}aussian Mechanism for Differential Privacy: Analytical Calibration and Optimal Denoising},
  author =       {Balle, Borja and Wang, Yu-Xiang},
  booktitle = 	 {Proceedings of the 35th International Conference on Machine Learning},
  pages = 	 {394--403},
  year = 	 {2018},
  editor = 	 {Dy, Jennifer and Krause, Andreas},
  volume = 	 {80},
  series = 	 {Proceedings of Machine Learning Research},
  month = 	 {7},
  publisher =    {PMLR}
}

@misc{bui2024newtonsmethodunlearnneural,
      title={On {Newton}'s Method to Unlearn Neural Networks}, 
      author={Nhung Bui and Xinyang Lu and Rachael Hwee Ling Sim and See-Kiong Ng and Bryan Kian Hsiang Low},
      year={2024},
      eprint={2406.14507},
      archivePrefix={arXiv},
      primaryClass={cs.LG}, 
}

@InProceedings{Cao18,
  author       = "Qiong Cao and Li Shen and Weidi Xie and Omkar M. Parkhi and Andrew Zisserman",
  title        = "{VGGFace2}: A dataset for recognising faces across pose and age",
  booktitle    = "International Conference on Automatic Face and Gesture Recognition",
  year         = "2018",
}

@INPROCEEDINGS{caoOGMUL,
  author={Cao, Yinzhi and Yang, Junfeng},
  booktitle={2015 IEEE Symposium on Security and Privacy}, 
  title={Towards Making Systems Forget with Machine Unlearning}, 
  year={2015},
  volume={},
  number={},
  pages={463-480},
  keywords={Training data;Data models;Machine learning algorithms;Data privacy;Learning systems;Computational modeling;Feature extraction;Machine Unlearning;Forgetting System;Adversarial Machine Learning}}

@InProceedings{charles2017stability,
  title = 	 {Stability and Generalization of Learning Algorithms that Converge to Global Optima},
  author =       {Charles, Zachary and Papailiopoulos, Dimitris},
  booktitle = 	 {Proceedings of the 35th International Conference on Machine Learning},
  pages = 	 {745--754},
  year = 	 {2018},
  editor = 	 {Dy, Jennifer and Krause, Andreas},
  volume = 	 {80},
  series = 	 {Proceedings of Machine Learning Research},
  month = 	 {7},
  publisher =    {PMLR},
}

@article{chaudhuri11aDP,
  author  = {Kamalika Chaudhuri and Claire Monteleoni and Anand D. Sarwate},
  title   = {Differentially Private Empirical Risk Minimization},
  journal = {Journal of Machine Learning Research},
  year    = {2011},
  volume  = {12},
  number  = {29},
  pages   = {1069--1109}
}

@inproceedings{chen_mia,
author = {Chen, Min and Zhang, Zhikun and Wang, Tianhao and Backes, Michael and Humbert, Mathias and Zhang, Yang},
title = {When Machine Unlearning Jeopardizes Privacy},
year = {2021},
isbn = {9781450384544},
publisher = {Association for Computing Machinery},
address = {New York, NY, USA},
booktitle = {Proceedings of the 2021 ACM SIGSAC Conference on Computer and Communications Security},
pages = {896–911},
numpages = {16},
keywords = {membership inference, machine unlearning, machine learning security and privacy},
location = {Virtual Event, Republic of Korea},
series = {CCS '21}
}

@inproceedings{
chien2024langevin,
title={Langevin Unlearning},
author={Eli Chien and Haoyu Peter Wang and Ziang Chen and Pan Li},
booktitle={Privacy Regulation and Protection in Machine Learning},
year={2024}
}

@inproceedings{
chien2024stochastic,
title={Certified Machine Unlearning via Noisy Stochastic Gradient Descent},
author={Eli Chien and Haoyu Peter Wang and Ziang Chen and Pan Li},
booktitle={The Thirty-eighth Annual Conference on Neural Information Processing Systems},
year={2024}
}

@inproceedings{
chien2022certifiedgraphunlearning,
title={Certified Graph Unlearning},
author={Eli Chien and Chao Pan and Olgica Milenkovic},
booktitle={NeurIPS 2022 Workshop: New Frontiers in Graph Learning},
year={2022},
}

@InProceedings{dwork2006,
author="Dwork, Cynthia
and McSherry, Frank
and Nissim, Kobbi
and Smith, Adam",
editor="Halevi, Shai
and Rabin, Tal",
title="Calibrating Noise to Sensitivity in Private Data Analysis",
booktitle="Theory of Cryptography",
year="2006",
publisher="Springer Berlin Heidelberg",
address="Berlin, Heidelberg",
pages="265--284",
isbn="978-3-540-32732-5"
}

@article{dwork2014algorithmic,
author = {Dwork, Cynthia and Roth, Aaron},
title = {The Algorithmic Foundations of Differential Privacy},
year = {2014},
issue_date = {August 2014},
publisher = {Now Publishers Inc.},
address = {Hanover, MA, USA},
volume = {9},
number = {3–4},
issn = {1551-305X},
journal = {Found. Trends Theor. Comput. Sci.},
month = {8},
pages = {211–407},
numpages = {197}
}

@inproceedings{drusvyatskiy2017proximalpointmethodrevisited,
	title = {The proximal point method revisited},
	publisher = {arXiv},
	author = {Drusvyatskiy, Dmitriy},
	month = dec,
	year = {2017},
	note = {arXiv:1712.06038},
	keywords = {Mathematics - Optimization and Control},
}

@article{elisseeff05a,
  author  = {Andre Elisseeff and Theodoros Evgeniou and Massimiliano Pontil},
  title   = {Stability of Randomized Learning Algorithms},
  journal = {Journal of Machine Learning Research},
  year    = {2005},
  volume  = {6},
  number  = {3},
  pages   = {55--79}
}

@inproceedings{ginartmakingAI,
 author = {Ginart, Antonio and Guan, Melody and Valiant, Gregory and Zou, James Y},
 booktitle = {Advances in Neural Information Processing Systems},
 editor = {H. Wallach and H. Larochelle and A. Beygelzimer and F. d\textquotesingle Alch\'{e}-Buc and E. Fox and R. Garnett},
 pages = {},
 publisher = {Curran Associates, Inc.},
 title = {Making {AI} Forget You: Data Deletion in Machine Learning},
 volume = {32},
 year = {2019}
}

@article{goel,
  author       = {Shashwat Goel and
                  Ameya Prabhu and
                  Ponnurangam Kumaraguru},
  title        = {Evaluating Inexact Unlearning Requires Revisiting Forgetting},
  journal      = {CoRR},
  volume       = {abs/2201.06640},
  year         = {2022},
  eprinttype    = {arXiv},
  eprint       = {2201.06640}
}

@InProceedings{fraboni24a,
  title = 	 {{SIFU}: Sequential Informed Federated Unlearning for Efficient and Provable Client Unlearning in Federated Optimization},
  author =       {Fraboni, Yann and Van Waerebeke, Martin and Scaman, Kevin and Vidal, Richard and Kameni, Laetitia and Lorenzi, Marco},
  booktitle = 	 {Proceedings of The 27th International Conference on Artificial Intelligence and Statistics},
  pages = 	 {3457--3465},
  year = 	 {2024},
  editor = 	 {Dasgupta, Sanjoy and Mandt, Stephan and Li, Yingzhen},
  volume = 	 {238},
  series = 	 {Proceedings of Machine Learning Research},
  month = 	 {5},
  publisher =    {PMLR}
}

@InProceedings{golatkar1,
author = {Golatkar, Aditya and Achille, Alessandro and Soatto, Stefano},
title = {Eternal Sunshine of the Spotless Net: Selective Forgetting in Deep Networks},
booktitle = {IEEE/CVF Conference on Computer Vision and Pattern Recognition (CVPR)},
month = {6},
year = {2020}
}

@article{golatkar2,
  author       = {Aditya Golatkar and
                  Alessandro Achille and
                  Stefano Soatto},
  title        = {Forgetting Outside the Box: Scrubbing Deep Networks of Information
                  Accessible from Input-Output Observations},
  journal      = {CoRR},
  volume       = {abs/2003.02960},
  year         = {2020},
  eprinttype    = {arXiv},
  eprint       = {2003.02960},
}

@article{Graves_Nagisetty_Ganesh_2021, title={Amnesiac Machine Learning}, volume={35}, rights={Copyright (c) 2021 Association for the Advancement of Artificial Intelligence}, ISSN={2374-3468},  number={13}, journal={Proceedings of the AAAI Conference on Artificial Intelligence}, author={Graves, Laura and Nagisetty, Vineel and Ganesh, Vijay}, year={2021}, month=may, pages={11516–11524}, language={en} }

@InProceedings{Guo,
  title = 	 {Certified Data Removal from Machine Learning Models},
  author =       {Guo, Chuan and Goldstein, Tom and Hannun, Awni and Van Der Maaten, Laurens},
  booktitle = 	 {Proceedings of the 37th International Conference on Machine Learning},
  pages = 	 {3832--3842},
  year = 	 {2020},
  editor = 	 {III, Hal Daumé and Singh, Aarti},
  volume = 	 {119},
  series = 	 {Proceedings of Machine Learning Research},
  month = 	 {8},
  publisher =    {PMLR},
}

@misc{mistral7b_instruct_v0.2,
  title        = {Mistral-7B-Instruct-v0.2},
  author       = {Mistral AI and Daniel Furman},
  year         = {2024},
  howpublished = {\url{https://huggingface.co/dfurman/Mistral-7B-Instruct-v0.2}},
  note         = {Instruction-tuned 7B model (Apache 2.0 license)}
}

@article{hulora,
  author       = {Edward J. Hu and
                  Yelong Shen and
                  Phillip Wallis and
                  Zeyuan Allen{-}Zhu and
                  Yuanzhi Li and
                  Shean Wang and
                  Weizhu Chen},
  title        = {LoRA: Low-Rank Adaptation of Large Language Models},
  journal      = {CoRR},
  volume       = {abs/2106.09685},
  year         = {2021},
  url          = {https://arxiv.org/abs/2106.09685},
  eprinttype    = {arXiv},
  eprint       = {2106.09685},
  timestamp    = {Tue, 29 Jun 2021 16:55:04 +0200},
  biburl       = {https://dblp.org/rec/journals/corr/abs-2106-09685.bib},
  bibsource    = {dblp computer science bibliography, https://dblp.org}
}

@inproceedings{
kurmanji2023towards,
title={Towards Unbounded Machine Unlearning},
author={Meghdad Kurmanji and Peter Triantafillou and Jamie Hayes and Eleni Triantafillou},
booktitle={Thirty-seventh Conference on Neural Information Processing Systems},
year={2023}
}

@InProceedings{izzo21a,
  title = 	 { Approximate Data Deletion from Machine Learning Models },
  author =       {Izzo, Zachary and Anne Smart, Mary and Chaudhuri, Kamalika and Zou, James},
  booktitle = 	 {Proceedings of The 24th International Conference on Artificial Intelligence and Statistics},
  pages = 	 {2008--2016},
  year = 	 {2021},
  editor = 	 {Banerjee, Arindam and Fukumizu, Kenji},
  volume = 	 {130},
  series = 	 {Proceedings of Machine Learning Research},
  publisher =    {PMLR},
  pdf = 	 {http://proceedings.mlr.press/v130/izzo21a/izzo21a.pdf}
}

@inproceedings{jang-etal-2023-knowledge,
    title = "Knowledge Unlearning for Mitigating Privacy Risks in Language Models",
    author = "Jang, Joel  and
      Yoon, Dongkeun  and
      Yang, Sohee  and
      Cha, Sungmin  and
      Lee, Moontae  and
      Logeswaran, Lajanugen  and
      Seo, Minjoon",
    editor = "Rogers, Anna  and
      Boyd-Graber, Jordan  and
      Okazaki, Naoaki",
    booktitle = "Proceedings of the 61st Annual Meeting of the Association for Computational Linguistics (Volume 1: Long Papers)",
    month = jul,
    year = "2023",
    address = "Toronto, Canada",
    publisher = "Association for Computational Linguistics",
    pages = "14389--14408",
}

@InProceedings{karimiPL,
author="Karimi, Hamed
and Nutini, Julie
and Schmidt, Mark",
editor="Frasconi, Paolo
and Landwehr, Niels
and Manco, Giuseppe
and Vreeken, Jilles",
title="Linear Convergence of Gradient and Proximal-Gradient Methods Under the Polyak-{\L}ojasiewicz Condition",
booktitle="Machine Learning and Knowledge Discovery in Databases",
year="2016",
publisher="Springer International Publishing",
address="Cham",
pages="795--811",
}

@inproceedings{
lei2021sharper,
title={Sharper Generalization Bounds for Learning with Gradient-dominated Objective Functions},
author={Yunwen Lei and Yiming Ying},
booktitle={International Conference on Learning Representations},
year={2021}
}

@article{LIUPLoverparametrized,
title = {Loss landscapes and optimization in over-parameterized non-linear systems and neural networks},
journal = {Applied and Computational Harmonic Analysis},
volume = {59},
pages = {85-116},
year = {2022},
note = {Special Issue on Harmonic Analysis and Machine Learning},
issn = {1063-5203},
author = {Chaoyue Liu and Libin Zhu and Mikhail Belkin},
keywords = {Deep learning, Non-linear optimization, Over-parameterized models, PL condition}
}

@inproceedings{liutrilemma,
author = {Liu, Zheyuan and Dou, Guangyao and Chien, Eli and Zhang, Chunhui and Tian, Yijun and Zhu, Ziwei},
title = {Breaking the Trilemma of Privacy, Utility, and Efficiency via Controllable Machine Unlearning},
year = {2024},
publisher = {Association for Computing Machinery},
address = {New York, NY, USA},
booktitle = {Proceedings of the ACM on Web Conference 2024},
pages = {1260–1271},
numpages = {12},
keywords = {data privacy, deep learning, machine unlearning, trustworthy ml},
location = {Singapore, Singapore},
series = {WWW '24}
}

@inproceedings{
liu2023certified,
title={Certified Minimax Unlearning with Generalization Rates and Deletion Capacity},
author={Jiaqi Liu and Jian Lou and Zhan Qin and Kui Ren},
booktitle={Thirty-seventh Conference on Neural Information Processing Systems},
year={2023}
}

@article{Martinet_1970prox, title={Brève communication. Régularisation d’inéquations variationnelles par approximations successives}, volume={4}, rights={© AFCET 1970}, ISSN={0373-8000, 2777-3515}, abstractNote={ESAIM: Mathematical Modelling and Numerical Analysis, an international journal on applied mathematics}, number={R3}, journal={Revue française d’informatique et de recherche opérationnelle. Série rouge}, author={Martinet, B.}, year={1970}, pages={154–158}, language={fr} }

@InProceedings{neel21a,
  title = 	 {Descent-to-Delete: Gradient-Based Methods for Machine Unlearning},
  author =       {Neel, Seth and Roth, Aaron and Sharifi-Malvajerdi, Saeed},
  booktitle = 	 {Proceedings of the 32nd International Conference on Algorithmic Learning Theory},
  pages = 	 {931--962},
  year = 	 {2021},
  editor = 	 {Feldman, Vitaly and Ligett, Katrina and Sabato, Sivan},
  volume = 	 {132},
  series = 	 {Proceedings of Machine Learning Research},
  publisher =    {PMLR},
}

@inproceedings{nguyen2022surveymachineunlearning,
	title = {A survey of machine unlearning},
	publisher = {arXiv},
	author = {Nguyen, Thanh Tam and Huynh, Thanh Trung and Ren, Zhao and Nguyen, Phi Le and Liew, Alan Wee-Chung and Yin, Hongzhi and Nguyen, Quoc Viet Hung},
	month = sep,
	year = {2024},
	note = {arXiv:2209.02299},
	keywords = {Computer Science - Machine Learning, Computer Science - Artificial Intelligence},
}

@article{Pollard_Johnson_Raffa_Celi_Mark_Badawi_2018, title={The {eICU} Collaborative Research Database, a freely available multi-center database for critical care research}, volume={5}, rights={2018 The Author(s)}, ISSN={2052-4463}, number={1}, journal={Scientific Data}, author={Pollard, Tom J. and Johnson, Alistair E. W. and Raffa, Jesse D. and Celi, Leo A. and Mark, Roger G. and Badawi, Omar}, year={2018}, month=sep, pages={180178}, language={en} }

@inproceedings{
qiao2024efficientgeneralizablecertifiedunlearning,
title={Hessian-Free Online Certified Unlearning},
author={Xinbao Qiao and Meng Zhang and Ming Tang and Ermin Wei},
booktitle={The Thirteenth International Conference on Learning Representations},
year={2025}
}

@article{TerhMaaD,
  author       = {Philipp Terh{\"{o}}rst and
                  Daniel F{\"{a}}hrmann and
                  Jan Niklas Kolf and
                  Naser Damer and
                  Florian Kirchbuchner and
                  Arjan Kuijper},
  title        = {{MAAD-Face}: {A} Massively Annotated Attribute Dataset for Face Images},
  journal      = {CoRR},
  volume       = {abs/2012.01030},
  year         = {2020},
  eprinttype    = {arXiv},
  eprint       = {2012.01030},
  timestamp    = {Thu, 14 Oct 2021 09:16:32 +0200},
  bibsource    = {dblp computer science bibliography, https://dblp.org}
}

@article{ullahTV,
  author       = {Enayat Ullah and
                  Tung Mai and
                  Anup Rao and
                  Ryan A. Rossi and
                  Raman Arora},
  title        = {Machine Unlearning via Algorithmic Stability},
  journal      = {CoRR},
  volume       = {abs/2102.13179},
  year         = {2021},
  eprinttype    = {arXiv},
  eprint       = {2102.13179},
  timestamp    = {Tue, 02 Mar 2021 12:11:01 +0100},
  bibsource    = {dblp computer science bibliography, https://dblp.org}
}

@inproceedings{sekharirememberwhatyouwant,
author = {Sekhari, Ayush and Acharya, Jayadev and Kamath, Gautam and Suresh, Ananda Theertha},
title = {Remember what you want to forget: algorithms for machine unlearning},
year = {2021},
booktitle = {Proceedings of the 35th International Conference on Neural Information Processing Systems},
articleno = {1383},
numpages = {12},
series = {NIPS '21}
}

@inproceedings{ShalevShwartz2009StochasticCO,
  title={Stochastic Convex Optimization},
  author={Shai Shalev-Shwartz and Ohad Shamir and Nathan Srebro and Karthik Sridharan},
  booktitle={Annual Conference Computational Learning Theory},
  year={2009},
}

@article{shamirsmelu,
  author       = {Gil I. Shamir and
                  Dong Lin and
                  Lorenzo Coviello},
  title        = {Smooth activations and reproducibility in deep networks},
  journal      = {CoRR},
  volume       = {abs/2010.09931},
  year         = {2020},
  eprinttype    = {arXiv},
  eprint       = {2010.09931},
  timestamp    = {Mon, 26 Oct 2020 15:39:44 +0100},
  bibsource    = {dblp computer science bibliography, https://dblp.org}
}

@INPROCEEDINGS{ShokriSS16,
  author={Shokri, Reza and Stronati, Marco and Song, Congzheng and Shmatikov, Vitaly},
  booktitle={2017 IEEE Symposium on Security and Privacy (SP)}, 
  title={Membership Inference Attacks Against Machine Learning Models}, 
  year={2017},
  volume={},
  number={},
  pages={3-18},
  keywords={Training;Data models;Predictive models;Privacy;Sociology;Statistics;Google}}

@inproceedings{
suriyakumar2022algorithms,
title={Algorithms that Approximate Data Removal: New Results and Limitations},
author={Vinith Menon Suriyakumar and Ashia Camage Wilson},
booktitle={Advances in Neural Information Processing Systems},
editor={Alice H. Oh and Alekh Agarwal and Danielle Belgrave and Kyunghyun Cho},
year={2022}
}

@inproceedings{wublackboxdp,
author = {Wu, Xi and Li, Fengan and Kumar, Arun and Chaudhuri, Kamalika and Jha, Somesh and Naughton, Jeffrey},
title = {Bolt-on Differential Privacy for Scalable Stochastic Gradient Descent-based Analytics},
year = {2017},
booktitle = {Proceedings of the 2017 ACM International Conference on Management of Data},
pages = {1307–1322},
numpages = {16},
keywords = {stochastic gradient descent, scalable data analytics, optimization, differential privacy},
location = {Chicago, Illinois, USA},
series = {SIGMOD '17}
}

@article{xusurvey,
author = {Xu, Heng and Zhu, Tianqing and Zhang, Lefeng and Zhou, Wanlei and Yu, Philip S.},
title = {Machine Unlearning: A Survey},
year = {2023},
issue_date = {January 2024},
publisher = {Association for Computing Machinery},
address = {New York, NY, USA},
volume = {56},
number = {1},
issn = {0360-0300},
journal = {ACM Comput. Surv.},
month = {8},
articleno = {9},
numpages = {36},
keywords = {model usability, data privacy, sample removal, machine unlearning, deep learning, Machine learning}
}

@inproceedings{zhangDPijcai2017p548,
  author    = {Jiaqi Zhang and Kai Zheng and Wenlong Mou and Liwei Wang},
  title     = {Efficient Private {ERM} for Smooth Objectives},
  booktitle = {Proceedings of the Twenty-Sixth International Joint Conference on
               Artificial Intelligence, {IJCAI-17}},
  pages     = {3922--3928},
  year      = {2017},
}

@inproceedings{
zhang2024towards,
title={Towards Certified Unlearning for Deep Neural Networks},
author={Binchi Zhang and Yushun Dong and Tianhao Wang and Jundong Li},
booktitle={Forty-first International Conference on Machine Learning},
year={2024}
}

\newpage

\appendix

\section{Proofs}
\label{sec:allproofs}
The proof of Theorem \ref{theorem:unlearning} is provided in Appendix \ref{sec:genunlearningproof}. The proof of Corollary \ref{thm:choosingK} is in Appendix \ref{sec:proofofchoosingK}.
\subsection{Proof of Lemma \ref{lemma:proxproof}}
\label{sec:proxproof}

\begin{proof}
We have the following lemma from \cite{drusvyatskiy2017proximalpointmethodrevisited}.
\begin{lemma} \cite{drusvyatskiy2017proximalpointmethodrevisited}
\label{lemma:convex}
If $f(\theta)$ is continuously differentiable with $L$-Lipschitz gradient, $-f(\theta) + \frac{L}{2} ||\theta||^2$ is convex.
\end{lemma}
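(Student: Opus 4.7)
The plan is to verify convexity of $g(\theta) := -f(\theta) + \tfrac{L}{2}\|\theta\|^2$ via the first-order characterization: a differentiable function $g$ is convex if and only if
$$g(\theta_2) \geq g(\theta_1) + \langle \nabla g(\theta_1), \theta_2 - \theta_1 \rangle$$
for all $\theta_1, \theta_2 \in \Theta$. The starting ingredient is the standard descent lemma, which is a direct consequence of $\nabla f$ being $L$-Lipschitz:
$$f(\theta_2) \leq f(\theta_1) + \langle \nabla f(\theta_1), \theta_2 - \theta_1 \rangle + \tfrac{L}{2}\|\theta_2 - \theta_1\|^2.$$

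First I would negate this inequality to swing over to $-f$. Then I would add $\tfrac{L}{2}\|\theta_2\|^2$ to both sides, so the left-hand side becomes $g(\theta_2)$, and expand the mixed term via $\|\theta_2-\theta_1\|^2 = \|\theta_2\|^2 - 2\langle \theta_1,\theta_2\rangle + \|\theta_1\|^2$. Grouping the $\|\theta_1\|^2$ and $\langle \theta_1,\theta_2\rangle$ pieces, the right-hand side becomes $-f(\theta_1) + \tfrac{L}{2}\|\theta_1\|^2 + \langle -\nabla f(\theta_1) + L\theta_1, \theta_2 - \theta_1\rangle$, which is exactly $g(\theta_1) + \langle \nabla g(\theta_1), \theta_2 - \theta_1\rangle$ since $\nabla g(\theta) = -\nabla f(\theta) + L\theta$. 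This establishes the first-order inequality and hence convexity.

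The only obstacle is the quadratic bookkeeping — making sure the $\tfrac{L}{2}\|\theta_2\|^2$ added to both sides combines with the $-\tfrac{L}{2}\|\theta_2-\theta_1\|^2$ coming from the descent lemma to reproduce exactly the linearization of $\tfrac{L}{2}\|\theta\|^2$ around $\theta_1$. This is a one-line identity and presents no conceptual difficulty. If one were willing to additionally assume $f \in C^2$, an even shorter argument would note that Lipschitzness of $\nabla f$ forces $\nabla^2 f \preceq L\mathbb{I}_d$, so $\nabla^2 g = L\mathbb{I}_d - \nabla^2 f \succeq 0$; however, the first-order route above avoids that extra regularity assumption and matches the stated hypothesis.
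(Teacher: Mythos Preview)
Your proof is correct. The paper does not actually prove this lemma; it simply cites it from \cite{drusvyatskiy2017proximalpointmethodrevisited} and states it without argument, so your first-order verification via the descent lemma supplies strictly more detail than the paper itself.
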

Now we define $\theta^*$ as the solution to the proximal problem as follows
    $$\theta^* = prox_{-f, \eta}(\theta_{t+1}) = \arg \min_x \{ - f(x) + \frac{1}{2 \eta} ||x - \theta_{t+1}||^2 \},$$ which is well-defined due to Lemma \ref{lemma:convex} and the fact that $\eta < \frac{1}{L}$, leading to strong convexity. Then the gradient of the objective function is zero at $\theta^*$ and thus
    $$-\nabla f(\theta^*) + \frac{1}{\eta} (\theta^* - \theta_{t+1}) = 0 ,$$
    $$\theta^* = \theta_{t+1} + \eta \nabla f(\theta^*) .$$
\end{proof}

\subsection{Proof of Theorem \ref{theorem:unlearning}}
\label{sec:genunlearningproof}

Like prior work in differential privacy and machine unlearning, our work hinges on the Gaussian mechanism for differential privacy, which ensures $(\epsilon, \delta)$-indistinguishability for normal random variables with the same variance.

\begin{theorem} \cite{dwork2014algorithmic}
\label{theorem:indistinguish}
    Let $X \sim \mathcal{N}(\mu, \sigma^2 \mathbb{I}_d)$ and $Y \sim \mathcal{N}(\mu', \sigma^2 \mathbb{I}_d)$. Suppose $\lVert \mu - \mu' \lVert_2 \leq \Delta$. Then for any $\delta > 0$, $X$ and $Y$ are $(\epsilon, \delta)$-indistinguishable if
    $$\sigma =  \frac{\Delta}{\epsilon} \sqrt{2 \log (1.25/\delta)}.$$
\end{theorem}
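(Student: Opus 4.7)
The plan is to prove the Gaussian mechanism guarantee by a direct computation of the privacy loss random variable, then invoke the standard conversion from privacy loss tail bounds to $(\epsilon,\delta)$-indistinguishability. First I would reduce to the one-dimensional case. Because the covariance $\sigma^2 \mathbb{I}_d$ is isotropic, an orthogonal change of coordinates aligns $\mu' - \mu$ with a single axis, and along the remaining $d-1$ coordinates the two distributions are identical, contributing zero to the privacy loss. Since a smaller $\|\mu-\mu'\|$ only strengthens indistinguishability, we may assume WLOG that $\mu=0$ and $\mu'=\Delta$ on the line.

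Next, I would write the privacy loss random variable explicitly. For the one-dimensional Gaussian densities $p_X, p_Y$ with means $0, \Delta$ and common variance $\sigma^2$, a straightforward computation gives
$$L(x) := \log \frac{p_X(x)}{p_Y(x)} = \frac{(x-\Delta)^2 - x^2}{2\sigma^2} = \frac{\Delta^2 - 2\Delta x}{2\sigma^2}.$$
The event $\{L(X) > \epsilon\}$ is therefore equivalent to $X < \frac{\Delta}{2} - \frac{\sigma^2\epsilon}{\Delta}$, i.e., to a one-sided event for $X/\sigma$. By the symmetry of the standard normal, this rearranges to $\Pr[Z > t]$ with $Z \sim \mathcal{N}(0,1)$ and $t = \frac{\sigma\epsilon}{\Delta} - \frac{\Delta}{2\sigma}$.

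The key technical step is the Gaussian tail bound $\Pr[Z > t] \leq \frac{1}{t\sqrt{2\pi}}\,e^{-t^2/2}$: substituting the prescribed $\sigma = \frac{\Delta}{\epsilon}\sqrt{2\log(1.25/\delta)}$ into $t$ and simplifying, the exponent $-t^2/2$ produces a factor of $\delta/1.25$, and the constant $1.25$ is tuned precisely so that the remaining prefactor $1/(t\sqrt{2\pi})$ is absorbed, yielding $\Pr[L(X) > \epsilon] \leq \delta$. I would verify the algebra under the standing assumption $\epsilon \leq 1$, the regime in which this statement is traditionally posed.

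Finally, I would invoke the standard equivalence (see, e.g., Dwork--Roth Lemma 3.17) between tail bounds on the privacy loss and $(\epsilon,\delta)$-indistinguishability: if $\Pr_{x \sim p_X}[L(x) > \epsilon] \leq \delta$ then for every measurable $S$, $\Pr[X \in S] \leq e^\epsilon \Pr[Y \in S] + \delta$. Swapping the roles of $\mu$ and $\mu'$ (which leaves both the bound on $\|\mu-\mu'\|$ and the value of $\sigma$ unchanged) gives the symmetric inequality and completes the proof. The only real obstacle is the tail-bound calibration in the third step; the reduction to one dimension and the privacy-loss-to-indistinguishability conversion are routine, but the algebra verifying that $1.25$ is the correct constant requires some care.
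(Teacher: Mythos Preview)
The paper does not supply its own proof of this statement: Theorem~\ref{theorem:indistinguish} is quoted directly from \cite{dwork2014algorithmic} (the Gaussian mechanism, Appendix~A of Dwork--Roth) and used as a black box in the analysis of Theorem~\ref{theorem:unlearning}. Your proposal is precisely the standard argument from that reference---reduce to one dimension by rotational invariance, compute the privacy loss $L(x)$ as an affine function of $x$, bound $\Pr[L(X)>\epsilon]$ via a Gaussian tail inequality with the constant $1.25$ calibrated to absorb the prefactor, and convert to $(\epsilon,\delta)$-indistinguishability via Lemma~3.17 of Dwork--Roth. So there is no discrepancy to report: you have reproduced the proof that the paper is citing.

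One small note: you correctly flag the implicit restriction $\epsilon\le 1$, which the paper's statement omits but which is required for the tail-bound calibration to go through with the constant $1.25$. This restriction is present in the original Dwork--Roth statement and is harmless in the paper's setting, since certified unlearning is only interesting for small $\epsilon$.
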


Therefore, to prove Theorem \ref{theorem:unlearning}, we need to bound the distance between the output of the unlearning algorithm and the learning algorithm. We can then add a sufficient amount of noise, scaled by this distance, to achieve $(\epsilon, \delta)$ unlearning.

\textit{Proof.}
Let $Z$ be a dataset of $m$ data points we would like to unlearn, where $m < n$. Let $\mathcal{D}$ represent the original full dataset and $\mathcal{D'} = \mathcal{D} \backslash Z$. Without loss of generality, define
$$f_{\mathcal{D}}(\theta) = \frac{1}{n} \sum_{i = 1}^n f_{z_i}(\theta),$$
$$f_{\mathcal{D}'}(\theta) = \frac{1}{n-m} \sum_{i = 1}^{n-m} f_{z_i}(\theta),$$
such that we have
$$f_\mathcal{D} = \frac{n-m}{n} f_{\mathcal{D}'}(\theta) + \frac{1}{n} \sum_{i = n-m+1}^n f_{z_i}(\theta),$$ 
$$f_\mathcal{D'} = \frac{n}{n-m}( f_{\mathcal{D}}(\theta) - \frac{1}{n} \sum_{i = n-m+1}^n f_{z_i}(\theta)).$$ 
Let $\{\theta_t\}_{t=0}^T$ represent the gradient descent iterates of the learning algorithm on $f_\mathcal{D}$, starting from $\theta_0$, and let $\{\theta'_t\}_{t=0}^T$ be the iterates of the learning algorithm on $f_\mathcal{D'}$ starting from the same $\theta_0$. Then we have
\begin{align}
\label{eq:gditerates}
\theta_0 &= \theta'_0 ,\\ 
\theta_t &= \theta_{t-1} - \eta \nabla f_\mathcal{D} (\theta_{t-1}) ,\\
\theta'_t &= \theta'_{t-1} - \eta \nabla f_\mathcal{D'} (\theta'_{t-1}).
\end{align}
Let $\{\theta''_t\}_{t=0}^{K}$ represent the gradient descent iterates of the unlearning algorithm starting at $\theta''_0 = \theta_{T-K}$. Finally, let $\tilde{\theta} = \theta''_K + \xi$ denote the iterate with Gaussian noise added. 

We first bound the distance between $\theta_t$ and $\theta'_t$ as follows.

\begin{lemma}
    Let $\{\theta_t\}_{t=0}^T$, $\{\theta'_t\}_{t=0}^T$ be defined as in (\ref{eq:gditerates}). Then 
    \begin{equation*}
        \lVert \theta_t - \theta'_t \rVert \leq \frac{2Gm}{Ln} [(1 +  \frac{\eta L n}{n - m})^t - 1].
    \end{equation*}
\end{lemma}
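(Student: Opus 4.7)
The plan is to prove the lemma by induction on $t$, establishing a linear recursion on the quantity $\lVert \theta_t - \theta'_t \rVert$ of the form $\lVert \theta_t - \theta'_t \rVert \leq c \lVert \theta_{t-1} - \theta'_{t-1}\rVert + b$ with $c = 1 + \tfrac{\eta L n}{n-m}$ and $b = \tfrac{2\eta m G}{n-m}$, and then solving the recursion in closed form. The base case $t=0$ is immediate since $\theta_0 = \theta'_0$ and the right-hand side of the claim vanishes at $t=0$.

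For the inductive step I would start from
\[
\theta_t - \theta'_t \;=\; (\theta_{t-1} - \theta'_{t-1}) - \eta\bigl(\nabla f_\mathcal{D}(\theta_{t-1}) - \nabla f_\mathcal{D'}(\theta'_{t-1})\bigr),
\]
and substitute $\nabla f_\mathcal{D'} = \tfrac{n}{n-m}\nabla f_\mathcal{D} - \tfrac{1}{n-m}\sum_{i=n-m+1}^{n}\nabla f_{z_i}$ evaluated at $\theta'_{t-1}$. Then I would add and subtract $\eta \tfrac{n}{n-m}\nabla f_\mathcal{D}(\theta_{t-1})$ to isolate a Lipschitz-smoothness term in $f_\mathcal{D}$ scaled by $\tfrac{n}{n-m}$. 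This rewrite produces three groups: the $\tfrac{n}{n-m}$-scaled smoothness difference $\nabla f_\mathcal{D}(\theta_{t-1}) - \nabla f_\mathcal{D}(\theta'_{t-1})$, a residual term $\tfrac{m}{n-m}\nabla f_\mathcal{D}(\theta_{t-1})$, and the scaled subset-gradient term $\tfrac{1}{n-m}\sum_{i \in Z}\nabla f_{z_i}(\theta'_{t-1})$. Applying the triangle inequality, the $L$-smoothness of $f_\mathcal{D}$, and the uniform gradient bound $G$ (which yields $\lVert \nabla f_\mathcal{D}\rVert \leq G$ and $\lVert \tfrac{1}{n-m}\sum_{i \in Z}\nabla f_{z_i}\rVert \leq \tfrac{m}{n-m}G$) gives exactly the desired recursion.

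The final step is to unroll the linear recursion. With $c - 1 = \tfrac{\eta L n}{n-m}$ and $b = \tfrac{2\eta m G}{n-m}$, the geometric-series formula gives $\lVert \theta_t - \theta'_t\rVert \leq b \cdot \tfrac{c^t - 1}{c-1}$, and the ratio $\tfrac{b}{c-1}$ simplifies precisely to $\tfrac{2mG}{Ln}$, yielding the stated bound. The main obstacle here is picking the right decomposition: several natural rewrites (for example, the one that adds/subtracts $\eta \nabla f_\mathcal{D'}(\theta_{t-1})$ and uses the $\tfrac{2mG}{n}$ bound on $\nabla f_\mathcal{D} - \nabla f_\mathcal{D'}$) all yield valid recursions, but only the substitution based on $\nabla f_\mathcal{D'} = \tfrac{n}{n-m}\nabla f_\mathcal{D} - \tfrac{1}{n-m}\sum_{i \in Z}\nabla f_{z_i}$ followed by the add/subtract of $\eta\tfrac{n}{n-m}\nabla f_\mathcal{D}(\theta_{t-1})$ produces the specific constants $1 + \tfrac{\eta Ln}{n-m}$ and $\tfrac{2\eta mG}{n-m}$ required to match the stated closed-form coefficient $\tfrac{2mG}{Ln}$.
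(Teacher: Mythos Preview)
Your proposal is correct and follows essentially the same approach as the paper: the paper also substitutes $\nabla f_{\mathcal{D}'}=\tfrac{n}{n-m}\nabla f_{\mathcal{D}}-\tfrac{1}{n-m}\sum_{i\in Z}\nabla f_{z_i}$, adds and subtracts $\eta\tfrac{n}{n-m}\nabla f_{\mathcal{D}}(\theta_{t-1})$, bounds the three resulting pieces using $L$-smoothness and the uniform gradient bound to obtain the recursion $\|\Delta_t\|\le (1+\tfrac{\eta Ln}{n-m})\|\Delta_{t-1}\|+\tfrac{2\eta mG}{n-m}$, and then sums the geometric series.
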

\begin{proof}
We have
$$\nabla f_\mathcal{D'}(\theta) = \frac{n}{n-m} ( \nabla f_\mathcal{D}(\theta)  - \frac{1}{n} \sum_{i=n-m+1}^n \nabla f_{z_i}(\theta)),$$
$$\theta'_t = \theta'_{t-1} - \eta  \frac{n}{n-m} ( \nabla f_\mathcal{D}(\theta'_{t-1})  - \frac{1}{n} \sum_{i=n-m+1}^n \nabla f_{z_i}(\theta'_{t-1})).$$
So we have for $\Delta_t = \theta_t - \theta'_t$,
\begin{align*}
    \Delta_t =& \Delta_{t-1} - \eta \nabla f_\mathcal{D} (\theta_{t-1}) + \eta  \frac{n}{n-m} ( \nabla f_\mathcal{D}(\theta'_{t-1})  - \frac{1}{n} \sum_{i=n-m+1}^n \nabla f_{z_i}(\theta'_{t-1})), \\
     =& \Delta_{t-1} - \eta \nabla f_\mathcal{D} (\theta_{t-1}) + \eta  \frac{n}{n-m}  \nabla f_\mathcal{D}(\theta'_{t-1})  - \eta  \frac{1}{n-m} \sum_{i=n-m+1}^n \nabla f_{z_i}(\theta'_{t-1}), \\
    =& \Delta_{t-1} - \eta \nabla f_\mathcal{D} (\theta_{t-1}) + \eta  \frac{n}{n-m}  \nabla f_\mathcal{D}(\theta_{t-1}) + \eta  \frac{n}{n-m}  (\nabla f_\mathcal{D}(\theta'_{t-1}) -   \nabla f_\mathcal{D}(\theta_{t-1}))  \\
    &- \eta  \frac{1}{n-m} \sum_{i = n-m+1}^n \nabla f_{z_i}(\theta'_{t-1}), \\
    =& \Delta_{t-1} +  \eta  \frac{m}{n-m}  \nabla f_\mathcal{D}(\theta_{t-1}) + \eta  \frac{n}{n-m}  (\nabla f_\mathcal{D}(\theta'_{t-1}) -   \nabla f_\mathcal{D}(\theta_{t-1}))  - \eta  \frac{1}{n-m} \sum_{i=n-m+1}^n \nabla f_{z_i}(\theta'_{t-1}).
\end{align*}
After taking the absolute value of each side, we obtain by the triangle inequality
$$||\Delta_t|| \leq ||\Delta_{t-1}|| +  \eta  \frac{m}{n-m} || \nabla f_\mathcal{D}(\theta_{t-1}) || + \eta  \frac{n}{n-m}  ||\nabla f_\mathcal{D}(\theta'_{t-1}) -   \nabla f_\mathcal{D}(\theta_{t-1}) ||  + \eta  \frac{1}{n-m} \sum_{i=n-m+1}^n ||\nabla f_{z_i}(\theta'_{t-1}) ||.$$
Since the gradient on each data sample is bounded by $G$, we have
$$||\Delta_t|| \leq ||\Delta_{t-1}|| +  \eta  \frac{m}{n-m} G + \eta  \frac{n}{n-m}  ||\nabla f_\mathcal{D}(\theta'_{t-1}) -   \nabla f_\mathcal{D}(\theta_{t-1}) ||  - \eta  \frac{1}{n-m} \sum_{i=n-m+1}^n G,$$
$$||\Delta_t|| \leq ||\Delta_{t-1}||  + \eta  \frac{n}{n-m}  ||\nabla f_\mathcal{D}(\theta'_{t-1}) -   \nabla f_\mathcal{D}(\theta_{t-1}) || + \frac{2 \eta G m}{n - m}.$$

By Lipschitz smoothness of the gradient, we have
$$||\Delta_t|| \leq ||\Delta_{t-1}||  + \eta  \frac{n}{n-m} L ||\theta'_{t-1} -   \theta_{t-1} || + \frac{2 \eta G m}{n - m},$$
$$||\Delta_t|| \leq ||\Delta_{t-1}||(1  +  \frac{\eta L n}{n-m} ) + \frac{2 \eta G m}{n - m}.$$
Since we have $|| \Delta_0|| = 0$, evaluating this recursive relationship yields for $t > 0$
$$||\Delta_t|| \leq \frac{2\eta G m}{n- m} \sum_{i = 0}^{t-1} (1 +  \frac{\eta L n}{n-m})^i$$
$$||\Delta_t|| \leq \frac{2\eta G m}{n-m}\frac{(1 +  \frac{\eta L n}{n-m})^{t} - 1}{ \frac{\eta L n}{n-m}} $$
$$||\Delta_t|| \leq 2G m \frac{(1 + \frac{\eta L n}{n-m})^{t} - 1}{L n} $$
This bound takes advantage of the difference between $f_\mathcal{D}$ and $f_\mathcal{D'}$ as it decreases with large $n$, but it also grows exponentially with the number of iterates.

\end{proof}

\begin{lemma}
    Let $\{\theta''_t\}^{K}_{t=0}$ represent the gradient descent iterates on $f_{\mathcal{D'}}$ starting at $\theta''_0 = \theta_{T-K}$ such that
    $$\theta''_t = \theta''_{t-1} - \eta \nabla f_{\mathcal{D'}}(\theta''_{t-1})$$
    Then
    $$|| \theta'_T - \theta''_K || \leq || \theta_{T - K} - \theta'_{T-K} || (1 + \eta L)^K$$
\end{lemma}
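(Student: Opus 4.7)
The plan is to use a standard coupling argument between two gradient-descent trajectories that share the same update rule (namely, gradient descent on $f_{\mathcal{D}'}$ with step-size $\eta$) but differ in their initial conditions. Observe that $\theta'_T$ is obtained by running $K$ steps of GD on $f_{\mathcal{D}'}$ starting from $\theta'_{T-K}$, while $\theta''_K$ is obtained by running $K$ steps of GD on the same function $f_{\mathcal{D}'}$ starting from $\theta_{T-K}$. So the question reduces entirely to how the initial gap $\lVert\theta_{T-K}-\theta'_{T-K}\rVert$ propagates under $K$ synchronized GD iterations.

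First I would define the couple $\Delta_t = \theta'_{T-K+t} - \theta''_t$ for $t = 0, 1, \ldots, K$, so that $\Delta_0 = \theta'_{T-K} - \theta_{T-K}$ and $\Delta_K = \theta'_T - \theta''_K$, which is exactly the quantity we want to bound. Subtracting the two gradient-descent updates gives
\begin{equation*}
\Delta_t = \Delta_{t-1} - \eta\bigl(\nabla f_{\mathcal{D}'}(\theta'_{T-K+t-1}) - \nabla f_{\mathcal{D}'}(\theta''_{t-1})\bigr).
\end{equation*}

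Next I would apply the triangle inequality together with the $L$-Lipschitz smoothness of $f_{\mathcal{D}'}$ (which holds because each $f_z$ is $L$-Lipschitz smooth, and averages preserve this) to conclude
\begin{equation*}
\lVert\Delta_t\rVert \leq \lVert\Delta_{t-1}\rVert + \eta L \,\lVert\theta'_{T-K+t-1} - \theta''_{t-1}\rVert = (1+\eta L)\lVert\Delta_{t-1}\rVert.
\end{equation*}
Unrolling this recursion yields $\lVert\Delta_K\rVert \leq (1+\eta L)^K \lVert\Delta_0\rVert$, which is exactly the desired bound.

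I do not expect a serious obstacle here; the argument is the textbook non-expansiveness-up-to-$(1+\eta L)$ estimate for GD under smoothness, and it does not require any convexity (if $f_{\mathcal{D}'}$ were convex and $\eta \leq 1/L$ we could even get $(1+\eta L)$ replaced by $1$, but we do not need that). The only minor thing to notice is that both trajectories must be run on the \emph{same} function $f_{\mathcal{D}'}$, which is the case by construction, so no cross terms involving $f_{\mathcal{D}}$ appear. Combining this lemma with the previous lemma's bound on $\lVert\theta_{T-K}-\theta'_{T-K}\rVert$ then yields a bound on $\lVert\theta'_T - \theta''_K\rVert$ in terms of problem parameters, which will subsequently feed into the sensitivity calculation needed to apply the Gaussian mechanism.
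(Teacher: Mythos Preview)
Your proposal is correct and follows essentially the same approach as the paper: define $\Delta_t = \theta'_{T-K+t} - \theta''_t$, subtract the two GD updates on $f_{\mathcal{D}'}$, apply the triangle inequality and $L$-Lipschitz smoothness of $\nabla f_{\mathcal{D}'}$ to obtain $\lVert\Delta_t\rVert \le (1+\eta L)\lVert\Delta_{t-1}\rVert$, and unroll. Your observation that both trajectories evolve under the \emph{same} function $f_{\mathcal{D}'}$ (so no cross terms arise) is exactly the reason this lemma is simpler than the preceding one.
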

\begin{proof}
    Let $\Delta'_t = \theta'_{T - K + t} - \theta''_t$ such that $\Delta'_0 = || \theta'_{T-K} - \theta''_0 ||$ and we bound it as follows
$$\Delta'_t = \theta'_{T-K + t} - \theta''_t = \theta'_{T-K + t - 1} - \eta \nabla f_{\mathcal{D'}}(\theta'_{T-K + t - 1})  - \theta''_{t - 1} + \eta \nabla f_{\mathcal{D'}}(\theta''_{t-1})$$
$$= \Delta'_{t-1} - \eta \nabla f_\mathcal{D'} (\theta'_{T-K + t - 1}) + \eta \nabla f_\mathcal{D'} (\theta''_{t-1})$$
$$||\Delta'_t|| \leq ||\Delta'_{t-1}|| + \eta ||\nabla f_\mathcal{D'} (\theta'_{T-K + t - 1}) - \nabla f_\mathcal{D'} (\theta''_{t-1})||$$
By Lipschitz smoothness
$$||\Delta'_t|| \leq ||\Delta'_{t-1}|| + L \eta ||\theta'_{T-K + t - 1} - \theta''_{t-1}||$$
$$||\Delta'_t|| \leq (1 + \eta L )||\Delta'_{t-1}|| 
 \leq ||\Delta'_0|| (1 + \eta L)^t$$
\end{proof}

 Returning to the algorithm, suppose the learning algorithm has $T$ iterations and we backtrack for $K$ iterations. Then the difference between the output of the learning algorithm (without noise) on $f_{\mathcal{D'}}$ and the unlearning algorithm would be
 $$||\theta'_T  - \theta''_K|| \leq ||\Delta_{T-K}|| (1 + \eta L)^K \leq  
 \frac{2mG}{Ln} ((1 + \frac{\eta L n}{n-m})^{T-K} - 1 ) (1 + \eta L)^K $$
where the bound on the right hand side decreases monotonically as $K$ increases from $0$ to $T$, as shown by the following. Let 
 $$h(K) = ((1 +  \frac{\eta L n}{n - m})^{T-K} - 1) (1 + \eta L)^K.$$
Then the derivative is
$$h'(K) = (1 + \eta L)^K [((1 + \frac{\eta L n}{n - m})^{T-K} - 1) \log(1 + \eta L) - (1 + \frac{\eta L n}{n - m})^{T - K} \log (1 + \frac{\eta L n}{n - m})],$$
we observe that $h'(K) < 0$ for $K \in [0, T]$. Therefore $h(K)$ is decreasing.

Therefore by Theorem \ref{theorem:indistinguish}, to achieve $\epsilon, \delta$-unlearning, we need to set the value of $\sigma$ as follows
$$\sigma  =  \frac{||\theta'_T - \theta''_K|| \sqrt{2 \log(1.25/\delta)}}{ \epsilon}= \frac{2 m G\cdot h(K) \sqrt{2 \log(1.25/\delta)}}{ Ln \epsilon}.$$

Now we prove the utility guarantee. For nonconvex smooth functions, we know by standard analysis that for the gradient descent iterates $\theta''_t$, we have
$$\frac{\eta}{2} \sum_{t = 0}^K || \nabla f_{\mathcal{D}'} (\theta''_t)|| \leq \sum_{t = 0}^{K}f_{\mathcal{D'}}(\theta''_t) - f_{\mathcal{D'}}(\theta''_{t+1}) = f_{\mathcal{D'}}(\theta''_0) - f_{\mathcal{D'}}( \theta''_K)$$

Now we consider the progress of the iterates $\theta_t$ on $f_{\mathcal{D'}}(\theta)$. By Lipschitz smoothness, we have
$$f_{\mathcal{D'}}(\theta_{t+1}) \leq f_{\mathcal{D'}}(\theta_t) + \langle \nabla f_{\mathcal{D'}}(\theta_t), - \eta \nabla f_{\mathcal{D}}(\theta_t) \rangle + \frac{L}{2} || \eta  \nabla f_{\mathcal{D}}(\theta_t) ||^2 $$
We have
$$\nabla f_{\mathcal{D}}(\theta_t) = \frac{n - m}{n} \nabla f_{\mathcal{D'}}(\theta_t) + \frac{1}{n} \sum_{i = n - m + 1}^n \nabla f_{z_i}(\theta_t)$$
\begin{equation*}
    \begin{aligned}
        f_{\mathcal{D'}}(\theta_{t+1}) \leq& f_{\mathcal{D'}}(\theta_t) - \eta \langle \nabla f_{\mathcal{D'}}(\theta_t), \frac{n - m}{n} \nabla f_{\mathcal{D'}}(\theta_t) + \frac{1}{n} \sum_{i = n - m + 1}^n \nabla f_{z_i}(\theta_t) \rangle \\
        &+ \frac{L \eta^2}{2} || \frac{n - m}{n} \nabla f_{\mathcal{D'}}(\theta_t) + \frac{1}{n} \sum_{i = n - m + 1}^n \nabla f_{z_i}(\theta_t)||^2 \\
         \leq & f_{\mathcal{D'}}(\theta_t) - \eta \frac{n - m}{n}|| \nabla f_{\mathcal{D'}} (\theta_t) ||^2 - \eta  \langle \nabla f_{\mathcal{D'}}(\theta_t), \frac{1}{n} \sum_{i = n - m + 1}^n \nabla f_{z_i}(\theta_t) \rangle \\
         &+ \frac{L\eta^2}{2} || \frac{n - m}{n} \nabla f_{\mathcal{D'}}(\theta_t) + \frac{1}{n} \sum_{i = n - m + 1}^n \nabla f_{z_i}(\theta_t)||^2 \\
         \leq& f_{\mathcal{D'}}(\theta_t) - \eta \frac{n - m}{n}|| \nabla f_{\mathcal{D'}} (\theta_t) ||^2 - \eta  \langle \nabla f_{\mathcal{D'}}(\theta_t), \frac{1}{n} \sum_{i = n - m + 1}^n \nabla f_{z_i}(\theta_t) \rangle \\
         &+ L \eta^2 || \frac{n - m}{n} \nabla f_{\mathcal{D'}}(\theta_t)||^2
         + L \eta^2 ||\frac{1}{n} \sum_{i = n - m + 1}^n \nabla f_{z_i}(\theta_t)||^2 \\ 
         \leq& f_{\mathcal{D'}}(\theta_t) - \eta \frac{n - m}{n} (1 - \frac{L \eta (n - m)}{n})|| \nabla f_{\mathcal{D'}} (\theta_t) ||^2 - \eta  \langle \nabla f_{\mathcal{D'}}(\theta_t), \frac{1}{n} \sum_{i = n - m + 1}^n \nabla f_{z_i}(\theta_t) \rangle \\
         &+ L \eta^2 \frac{1}{n} \sum_{i = n - m + 1}^n ||\nabla f_{z_i}(\theta_t)||^2 \\
         \leq& f_{\mathcal{D'}}(\theta_t) - \eta \frac{n - m}{n} (1 - \frac{L \eta (n - m)}{n})|| \nabla f_{\mathcal{D'}} (\theta_t) ||^2 + \eta \frac{G^2m}{n} + L \eta^2 
\frac{m G}{n}
    \end{aligned}
\end{equation*}
Let the step size $\eta$ be bounded such that $\eta \leq \frac{n}{2 (n - m)L}$, then we have
$$f_{\mathcal{D'}}(\theta_{t+1}) \leq f_{\mathcal{D'}}(\theta_t) - \frac{\eta (n - m)}{2n} || \nabla f_{\mathcal{D'}}(\theta_t) ||^2  + \frac{\eta G^2 m}{n} + \frac{L \eta^2 G m}{n}$$
Rearrange the terms to get
$$  \frac{\eta (n - m)}{2n} || \nabla f_{\mathcal{D'}}(\theta_t) ||^2\leq f_{\mathcal{D'}}(\theta_t) - f_{\mathcal{D'}}(\theta_{t+1})  + \frac{\eta G^2 m}{n} + \frac{L \eta^2 G m}{n}$$
Sum from $t = 0$ to $T-K - 1$
$$  \frac{\eta (n - m)}{2n} \sum_{t=0}^{T- K - 1} || \nabla f_{\mathcal{D'}}(\theta_t) ||^2\leq f_{\mathcal{D'}}(\theta_0) - f_{\mathcal{D'}}(\theta_{T-K})  + (T-K - 1)(\frac{\eta G^2 m}{n} + \frac{L \eta^2 G m}{n})$$
$$ \sum_{t=0}^{T- K - 1} || \nabla f_{\mathcal{D'}}(\theta_t) ||^2\leq 
 \frac{2n}{\eta (n - m)} (f_{\mathcal{D'}}(\theta_0) - f_{\mathcal{D'}}(\theta_{T-K})  + (T-K - 1)(\frac{\eta G^2 m}{n} + \frac{L \eta^2 G m}{n}))$$
 $$ \sum_{t=0}^{T- K - 1} || \nabla f_{\mathcal{D'}}(\theta_t) ||^2\leq 
 \frac{2n}{\eta (n - m)} (f_{\mathcal{D'}}(\theta_0) - f_{\mathcal{D'}}(\theta_{T-K}))  + (T-K - 1)(\frac{ 2 G^2 m}{n-m} + \frac{2 L \eta G m}{n - m})$$
From standard analysis of gradient descent on nonconvex functions, we know
$$  \eta \sum_{t=0}^{K} || \nabla f_{\mathcal{D'}}(\theta''_t) ||^2\leq f_{\mathcal{D'}}(\theta''_0) - f_{\mathcal{D'}}(\theta''_{K})  $$
$$  \sum_{t=0}^{K} || \nabla f_{\mathcal{D'}}(\theta''_t) ||^2\leq \frac{1}{\eta } (f_{\mathcal{D'}}(\theta''_0) - f_{\mathcal{D'}}(\theta''_{K})  )$$
Summing the equations yields
$$  \sum_{t=0}^{T- K - 1} || \nabla f_{\mathcal{D'}}(\theta_t) ||^2 +  \sum_{t=0}^{K} || \nabla f_{\mathcal{D'}}(\theta''_t) ||^2$$
$$\leq \frac{2n}{\eta (n - m)} (f_{\mathcal{D'}}(\theta_0) - f_{\mathcal{D'}}(\theta_{T-K}))  + (T-K - 1)(\frac{ 2 G^2 m}{n-m} + \frac{2 L \eta G m}{n - m}) + \frac{1}{\eta } (f_{\mathcal{D'}}(\theta''_0) - f_{\mathcal{D'}}(\theta''_{K})  )$$
$$\leq \frac{2n}{\eta (n - m)} (f_{\mathcal{D'}}(\theta_0) - f_{\mathcal{D'}}(\theta''_{K}))  + (T-K - 1)(\frac{ 2 G^2 m}{n-m} + \frac{2 L \eta G m}{n - m}) $$
We can expand $\mathbb{E}[||\nabla f_{\mathcal{D'}} (\tilde{\theta}'')||^2]$ as follows
\begin{equation*}
    \begin{aligned}
        ||\nabla f_{\mathcal{D'}} (\tilde{\theta}'')||^2 
        &= ||\nabla f_{\mathcal{D'}} (\tilde{\theta}'')||^2 + \mathbb{E}[2 \nabla f_{\mathcal{D'}} (\tilde{\theta}'')^T \xi] + \mathbb{E}[|| \xi ||^2] \\
        &= ||\nabla f_{\mathcal{D'}} (\tilde{\theta}'')||^2 + \sigma^2.
    \end{aligned}
\end{equation*}
So we have
\begin{equation*}
\begin{aligned}
    \frac{1}{T} \Big[\sum_{t=0}^{T- K - 1} || \nabla f_{\mathcal{D'}}(\theta_t) ||^2 +  \sum_{t=0}^{K-1} || \nabla f_{\mathcal{D'}}(\theta''_t) ||^2 + \mathbb{E}[||\nabla f_{\mathcal{D'}}(\tilde{\theta}'')||^2 ] \Big] \leq& \sigma^2 + \frac{2n (f_{\mathcal{D'}}(\theta_0) - f_{\mathcal{D'}}(\theta''_{K}))}{T \eta (n - m)}   \\
    &+ \frac{T-K-1}{T}(\frac{ 2 G^2 m}{n-m} + \frac{2 L \eta G m}{n - m}) \\
\end{aligned}
\end{equation*}

\subsubsection{Proof of Corollary \ref{cor:generalization}}

To prove Corollary \ref{cor:generalization}, we first need to obtain empirical risk bounds for PL functions. Corollary \ref{cor:PL} states that the empirical risk converges linearly with both $T$ and $K$. 

\begin{corollary}
\label{cor:PL}
Suppose the conditions of Theorem \ref{theorem:unlearning} hold and in addition, $f_{\mathcal{D}'}$ satisfies the PL condition with parameter $\mu$. Let $f_{\mathcal{D'}}^*$ represent the global optimal value of $f_{\mathcal{D'}}$. Then
\begin{equation*}
    \mathbb{E}[ f_{\mathcal{D'}}(\tilde{\theta}'')] - f_{\mathcal{D'}}^* \leq L \sqrt{d}\sigma + (1 -\eta \mu)^K \frac{G^2 m + L \eta G m}{ \mu (n - m)}
    + (1 - \frac{\eta \mu (n - m)}{n} )^{T-K}(1 -\eta \mu)^K (f_{\mathcal{D'}}(\theta_{0}) - f_{\mathcal{D'}}^*)  
\end{equation*}
where $\sigma$ is defined in (\ref{eq:sigma}) and the expectation is taken with respect to the Gaussian noise added at the end of $U$.
\end{corollary}

\begin{proof}
As before, we first consider the gradient descent iterates on $f_{\mathcal{D}'}$. For $\mu$-PL and smooth functions, we know that for the iterates $\theta''_t$, we have \cite{karimiPL}

$$f_{\mathcal{D}'}(\theta''_{t}) - f_{\mathcal{D}'}^* \leq (1 - \eta \mu)^t (f_{\mathcal{D}'}(\theta''_{0}) - f_{\mathcal{D}'}^*) .$$

Now we track the progress of the iterates $\theta_t$ on $f_{\mathcal{D'}}$. By Lipschitz smoothness and the above analysis, we have
$$f_{\mathcal{D'}}(\theta_{t+1}) - f_{\mathcal{D'}}(\theta_t)  \leq \langle \nabla f_{\mathcal{D'}}(\theta_t), - \eta \nabla f_{\mathcal{D}}(\theta_t) \rangle + \frac{L}{2} || \eta  \nabla f_{\mathcal{D}}(\theta_t) ||^2, $$
$$ f_{\mathcal{D'}}(\theta_{t+1}) - f_{\mathcal{D'}}(\theta_t) \leq - \eta \frac{n - m}{n} (1 - \frac{L \eta (n - m)}{n})|| \nabla f_{\mathcal{D'}} (\theta_t) ||^2 + \eta \frac{G^2m}{n} + L \eta^2 
\frac{m G}{n}.$$
Let the step size $\eta$ be bounded such that $\eta \leq \frac{n}{2 (n - m)L}$, then we have
$$f_{\mathcal{D'}}(\theta_{t+1}) - f_{\mathcal{D'}}(\theta_t) \leq  - \frac{\eta (n - m)}{2n} || \nabla f_{\mathcal{D'}}(\theta_t) ||^2  + \frac{\eta G^2 m}{n} + \frac{L \eta^2 G m}{n}.$$
By the PL inequality, we have
$$f_{\mathcal{D'}}(\theta_{t+1}) - f_{\mathcal{D'}}(\theta_t) \leq  - \frac{\eta \mu (n - m)}{n} (f_{\mathcal{D'}}(\theta_{t}) - f_{\mathcal{D'}}^*)  + \frac{\eta G^2 m}{n} + \frac{L \eta^2 G m}{n},$$
$$f_{\mathcal{D'}}(\theta_{t+1}) - f_{\mathcal{D'}}^* \leq (1 - \frac{\eta \mu (n - m)}{n} )(f_{\mathcal{D'}}(\theta_{t}) - f_{\mathcal{D'}}^*)  + \frac{\eta G^2 m}{n} + \frac{L \eta^2 G m}{n}.$$
We evaluate this recursive relationship to obtain
$$f_{\mathcal{D'}}(\theta_{t+1}) - f_{\mathcal{D'}}^* \leq (1 - \frac{\eta \mu (n - m)}{n} )^{t+1}(f_{\mathcal{D'}}(\theta_{0}) - f_{\mathcal{D'}}^*)  + \frac{1}{n}(\eta G^2 m + L \eta^2 G m) \sum_{i=0}^t (1 - \frac{\eta \mu (n - m)}{n})^i$$
$$f_{\mathcal{D'}}(\theta_{t+1}) - f_{\mathcal{D'}}^* \leq (1 - \frac{\eta \mu (n - m)}{n} )^{t+1}(f_{\mathcal{D'}}(\theta_{0}) - f_{\mathcal{D'}}^*)  + \frac{1}{n}(\eta G^2 m + L \eta^2 G m) \sum_{i=0}^t (1 - \frac{\eta \mu (n - m)}{n})^i$$
$$ \leq (1 - \frac{\eta \mu (n - m)}{n} )^{t+1}(f_{\mathcal{D'}}(\theta_{0}) - f_{\mathcal{D'}}^*)  + \frac{1}{n}(\eta G^2 m + L \eta^2 G m) \frac{n}{\eta \mu (n - m)}$$
$$f_{\mathcal{D'}}(\theta_{t+1}) - f_{\mathcal{D'}}^*  \leq (1 - \frac{\eta \mu (n - m)}{n} )^{t+1}(f_{\mathcal{D'}}(\theta_{0}) - f_{\mathcal{D'}}^*)  + \frac{G^2 m + L \eta G m}{ \mu (n - m)}$$
$$f_{\mathcal{D'}}(\theta_{0}'') - f_{\mathcal{D'}}^* = f_{\mathcal{D'}}(\theta_{T-K}) - f_{\mathcal{D'}}^* \leq (1 - \frac{\eta \mu (n - m)}{n} )^{T-K}(f_{\mathcal{D'}}(\theta_{0}) - f_{\mathcal{D'}}^*)  + \frac{G^2 m + L \eta G m}{ \mu (n - m)}$$
\begin{equation}
\label{eq:beforenoise}
f_{\mathcal{D'}}(\theta_{K}'') - f_{\mathcal{D'}}^* \leq (1 - \frac{\eta \mu (n - m)}{n} )^{T-K}(1 -\eta \mu)^K (f_{\mathcal{D'}}(\theta_{0}) - f_{\mathcal{D'}}^*)  + (1 -\eta \mu)^K \frac{G^2 m + L \eta G m}{ \mu (n - m)}    
\end{equation}

By Lipschitz smoothness, we have
\begin{equation}
    \begin{aligned}
        f_{\mathcal{D'}}(\tilde{\theta}'') - f_{\mathcal{D'}}^* \leq& f_{\mathcal{D'}}(\tilde{\theta}'') - f_{\mathcal{D'}}(\theta_K'') + (1 - \frac{\eta \mu (n - m)}{n} )^{T-K}(1 -\eta \mu)^K (f_{\mathcal{D'}}(\theta_{0}) - f_{\mathcal{D'}}^*) \\
        &+ (1 -\eta \mu)^K \frac{G^2 m + L \eta G m}{ \mu (n - m)} \\
        \leq& L|| \tilde{\theta}'' - \theta''_K || + (1 - \frac{\eta \mu (n - m)}{n} )^{T-K}(1 -\eta \mu)^K (f_{\mathcal{D'}}(\theta_{0}) - f_{\mathcal{D'}}^*)  \\
        &+ (1 -\eta \mu)^K \frac{G^2 m + L \eta G m}{ \mu (n - m)} \\
        =&L|| \xi || + (1 - \frac{\eta \mu (n - m)}{n} )^{T-K}(1 -\eta \mu)^K (f_{\mathcal{D'}}(\theta_{0}) - f_{\mathcal{D'}}^*)  \\
        &+ (1 -\eta \mu)^K \frac{G^2 m + L \eta G m}{ \mu (n - m)} \\
    \end{aligned}
\end{equation}
If we take the expectation on both sides with respect to the noise added at the end of the algorithm $U$, we have
\begin{equation*}
    \mathbb{E}[ f_{\mathcal{D'}}(\tilde{\theta}'')] - f_{\mathcal{D'}}^* \leq L\sqrt{d}\sigma  + (1 - \frac{\eta \mu (n - m)}{n} )^{T-K}(1 -\eta \mu)^K (f_{\mathcal{D'}}(\theta_{0}) - f_{\mathcal{D'}}^*)  + (1 -\eta \mu)^K \frac{G^2 m + L \eta G m}{ \mu (n - m)}
\end{equation*}
    
\end{proof}

Now we can prove Corollary \ref{cor:generalization}. Bounds on generalization can be derived using seminal results in algorithmic stability \cite{elisseeff05a}. Prior work on the generalization ability of unlearning algorithms focus on the strongly convex case \cite{sekharirememberwhatyouwant, qiao2024efficientgeneralizablecertifiedunlearning, liu2023certified, ullahTV}, which feature excess risk
bounds for the empirical risk minimizer when the loss is strongly convex \cite{ShalevShwartz2009StochasticCO}. By considering PL functions, we are, to the best of our knowledge, the first to consider unlearning generalization in a nonconvex setting. In contrast, algorithms on PL objective functions can at best satisfy pointwise \cite{charles2017stability} or on-average \cite{lei2021sharper} stability. The following result derives from Theorem 1 of \cite{lei2021sharper}. We utilize the fact that for $w^* \in \arg \min_{\theta \in \Theta} F(\theta)$, we have $\mathbb{E}[f^*_{\mathcal{D}'}] \leq \mathbb{E}[f_{\mathcal{D}'}(w^*)] = F(w^*) = F^*$, and we also slightly modify the result to include the stronger bounded gradient assumption used in our work.
\begin{lemma} \cite{lei2021sharper} For $F$ defined as $F(\theta) = \mathbb{E}_{z \sim \mathcal{Z}} [f_z(\theta)]$

and $\theta$ as the output of an algorithm dependent on $\mathcal{D'}$, we have
\begin{equation*}
    \mathbb{E}[F(\theta) - F^*] \leq \frac{2 G^2}{(n - m) \mu} + \frac{L}{2 \mu} \mathbb{E}[f_{\mathcal{D'}}(\theta) - f^*_{\mathcal{D}'}].
\end{equation*}

\end{lemma}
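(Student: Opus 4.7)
The plan is to apply the on-average model stability framework of Lei--Ying. First, I decompose
\begin{equation*}
\mathbb{E}[F(\theta) - F^*] = \underbrace{\mathbb{E}[F(\theta) - f_{\mathcal{D}'}(\theta)]}_{\text{generalization gap}} + \underbrace{\mathbb{E}[f_{\mathcal{D}'}(\theta) - f^*_{\mathcal{D}'}]}_{\text{empirical excess risk}} + \bigl(\mathbb{E}[f^*_{\mathcal{D}'}] - F^*\bigr).
\end{equation*}
The last summand is $\leq 0$ by the fact highlighted just before the lemma: for any fixed $w^* \in \arg\min F$, $\mathbb{E}[f^*_{\mathcal{D}'}] \leq \mathbb{E}[f_{\mathcal{D}'}(w^*)] = F(w^*) = F^*$. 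So it suffices to bound the generalization gap by $\tfrac{2G^2}{(n-m)\mu}$ plus a constant fraction of the empirical excess risk.

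Second, I rewrite the generalization gap via the ghost-sample trick. Let $\tilde z_1,\ldots,\tilde z_{n-m}$ be an independent i.i.d.\ copy of $\mathcal{D}'$ and let $S^{(i)}$ denote $\mathcal{D}'$ with its $i$-th element replaced by $\tilde z_i$. Exchangeability gives $\mathbb{E}[F(A(\mathcal{D}'))] = \tfrac{1}{n-m}\sum_i \mathbb{E}[f_{z_i}(A(S^{(i)}))]$, so
\begin{equation*}
\mathbb{E}[F(\theta) - f_{\mathcal{D}'}(\theta)] = \frac{1}{n-m}\sum_{i=1}^{n-m}\mathbb{E}\bigl[f_{z_i}(A(S^{(i)})) - f_{z_i}(A(\mathcal{D}'))\bigr].
\end{equation*}
Each summand is controlled by $L$-smoothness followed by Young's inequality with parameter $\alpha>0$:
\begin{equation*}
f_{z_i}(A(S^{(i)})) - f_{z_i}(A(\mathcal{D}')) \leq \tfrac{1}{2\alpha}\|\nabla f_{z_i}(A(\mathcal{D}'))\|^2 + \tfrac{\alpha+L}{2}\|A(S^{(i)}) - A(\mathcal{D}')\|^2 \leq \tfrac{G^2}{2\alpha} + \tfrac{\alpha+L}{2}\|A(S^{(i)}) - A(\mathcal{D}')\|^2.
\end{equation*}

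Third, I convert the on-average model stability $\tfrac{1}{n-m}\sum_i \mathbb{E}\|A(S^{(i)}) - A(\mathcal{D}')\|^2$ into the empirical excess risk using PL. A $\mu$-PL function satisfies the quadratic growth bound $\|\psi - \mathrm{proj}_{\arg\min f_{\mathcal{D}'}}(\psi)\|^2 \leq \tfrac{2}{\mu}(f_{\mathcal{D}'}(\psi) - f^*_{\mathcal{D}'})$ for every $\psi$. Applying this to $\psi = A(\mathcal{D}')$ and $\psi = A(S^{(i)})$, and using $\|a-b\|^2 \leq 2\|a-c\|^2 + 2\|b-c\|^2$ with $c$ chosen as a common minimizer of $f_{\mathcal{D}'}$, reduces the task to bounding $\mathbb{E}[f_{\mathcal{D}'}(A(S^{(i)})) - f^*_{\mathcal{D}'}]$. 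That quantity is in turn pinned to $\mathbb{E}[f_{\mathcal{D}'}(\theta) - f^*_{\mathcal{D}'}]$ by observing that the gradients of $f_{\mathcal{D}'}$ and $f_{S^{(i)}}$ differ by at most $\tfrac{2G}{n-m}$ in norm (one of $n-m$ summands is swapped, each contributing at most $G$). Tuning $\alpha$ to balance $\tfrac{G^2}{2\alpha}$ against the stability-induced factor produces the $\tfrac{2G^2}{(n-m)\mu}$ and $\tfrac{L}{2\mu}$ constants stated in the lemma.

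The main obstacle is this stability step: bounding $\mathbb{E}\|A(S^{(i)}) - A(\mathcal{D}')\|^2$ for a \emph{generic} algorithm $A$ using only the structural PL property of the fixed objective $f_{\mathcal{D}'}$. PL grants quadratic growth around $\arg\min f_{\mathcal{D}'}$, but $A(S^{(i)})$ is an approximate minimizer of a \emph{different} empirical objective $f_{S^{(i)}}$, so the detour through $\arg\min f_{\mathcal{D}'}$ must absorb cross terms between the two objectives. Tracking these carefully, so that the $G^2/(n-m)$-type numerator comes out with a clean coefficient of $2$ and the multiplier on the empirical excess risk is exactly $L/(2\mu)$, is the delicate part of the argument.
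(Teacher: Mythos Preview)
The paper does not actually prove this lemma: it quotes it from Lei--Ying (their Theorem~1), noting only the inequality $\mathbb{E}[f^*_{\mathcal{D}'}]\le F^*$ and the simplification afforded by the bounded-gradient assumption. Your first two steps---the three-term decomposition using $\mathbb{E}[f^*_{\mathcal{D}'}]\le F^*$, the ghost-sample rewriting of the generalization gap, and the smoothness-plus-Young bound---are exactly the opening of the Lei--Ying argument, so at that level the approaches coincide.

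The genuine gap is in your third step, and it is more than bookkeeping. First, quadratic growth bounds the distance from a point to its \emph{nearest} minimizer of $f_{\mathcal{D}'}$; for PL (as opposed to strongly convex) objectives the minimizer set can be a nontrivial manifold, so there is no single ``common minimizer'' $c$ for which both $\|A(\mathcal{D}')-c\|^2$ and $\|A(S^{(i)})-c\|^2$ are controlled by quadratic growth. Second, even granting a workable pivot, quadratic growth applied to $A(S^{(i)})$ produces $f_{\mathcal{D}'}(A(S^{(i)}))-f^*_{\mathcal{D}'}$, and relating this to the empirical excess risk on the right-hand side forces you to compare $f_{\mathcal{D}'}(A(S^{(i)}))$ with $f_{S^{(i)}}(A(S^{(i)}))$. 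Their difference is $\tfrac{1}{n-m}\bigl(f_{z_i}(A(S^{(i)}))-f_{\tilde z_i}(A(S^{(i)}))\bigr)$; since $z_i$ is independent of $A(S^{(i)})$, taking expectations re-injects $\mathbb{E}[F(\theta)]$ (equivalently, the generalization gap) into the right-hand side. A uniform gradient-difference bound of $2G/(n-m)$ does not by itself close this loop. What is needed is a self-bounding argument in which the re-injected generalization term carries a coefficient of order $1/(n-m)$ and can be absorbed into the left-hand side; Lei--Ying carry this out, but your sketch stops at naming the obstacle without supplying that mechanism, and the clean constants $2$ and $L/(2\mu)$ do not fall out of the outline as written.
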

We obtain the result by substituting in (\ref{eq:beforenoise}). The population risk can be decomposed to three terms: an $O(1/n)$ term representing the difference with empirical risk, an $O(\sqrt{d} \sigma)$
 term representing the Gaussian noise, and a term representing the convergence of the underlying optimization framework. The first two terms are roughly equivalent across most works because they follow from fundamental principles.

\subsubsection{Proof of Corollary \ref{thm:choosingK}}
\label{sec:proofofchoosingK}
We want to determine the value of $K$ required to maintain a privacy level $\epsilon$ for a chosen level of noise $\sigma$. From (\ref{eq:sigma}) we have
$$\epsilon = \frac{h(K) 2 m G \sqrt{2 \log (1.25/\delta)}}{\sigma L n}$$
Although this does not have an exact explicit solution for $K(\epsilon)$, we can bound $h(K)$ as follows
$$h(K) \leq ((1 + \frac{\eta L n}{n - m})^{T - K} - 1)(1 + \frac{\eta Ln}{n-m})^K = (1 + \frac{\eta Ln}{n-m})^T - (1 + \frac{\eta Ln}{n-m})^K $$
Then we have
$$\epsilon \leq \frac{ 2 m G \sqrt{2 \log (1.25/\delta)}}{\sigma L n}((1 + \frac{\eta Ln}{n-m})^T - (1 + \frac{\eta Ln}{n-m})^K)$$
$$(1 +  \frac{\eta Ln}{n-m})^K \leq (1 +  \frac{\eta Ln}{n-m})^T - \frac{\sigma L n \epsilon }{2 m G \sqrt{2 \log (1.25 / \delta)}}$$
$$K \leq \frac{\log\Big( (1 +  \frac{\eta Ln}{n-m})^T - \frac{\sigma L n \epsilon }{2 m G \sqrt{2 \log (1.25 / \delta)}} \Big)}{\log(1 +  \frac{\eta Ln}{n-m})}$$
We therefore obtain a close upper bound on $K(\epsilon)$ for fixed $\sigma$. In practice we can choose $K$ equal to this bound to ensure the privacy guarantee is achieved. We show in Figure \ref{fig:eicu_vs_K} that this bound is close to tight for $ 0 < \epsilon \leq 1$ and real-world parameters.

\begin{figure}[H]
    \centering
    \includegraphics[width=0.5\linewidth]{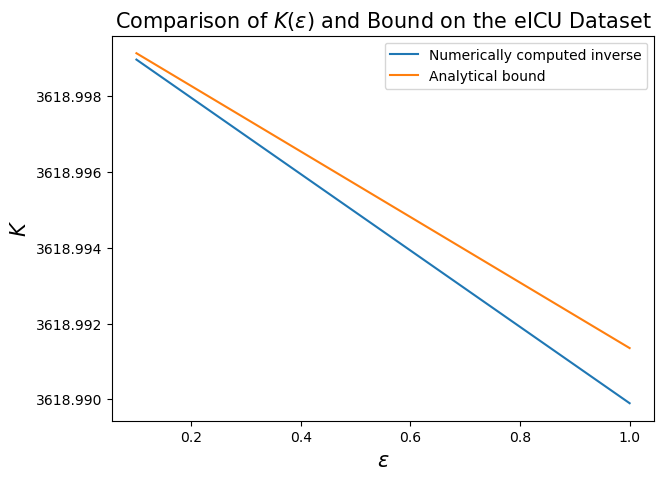}
    \caption{Comparison of $K(\epsilon)$ and analytically derived bound using real-world parameters from the eICU dataset.}
    \label{fig:eicu_vs_K}
\end{figure}

\section{Experiments}
\label{sec:experiments}

In Appendix \ref{sec:implementation_details}, we review the details of our experimental implementations, including the membership inference attacks (Appendix \ref{sec:MIA_deets}), and baseline methods (Appendix \ref{sec:baselines}). In Appendix \ref{sec:computation}, we provide computation time experiments. In Appendix \ref{sec:numerical_results}, we present the full versions of Tables \ref{tab:combined_auc_no_ood} and \ref{tab:combined_mia}, including 1-sigma error bars for the MIA results, and tables for the noiseless version of the certified unlearning algorithms. We review the impact of rewinding on the attack success in Figure \ref{fig:full_mia_nn_flat}. 

\subsection{Implementation Details}
\label{sec:implementation_details}

Code is open-sourced at the following GitHub link: \url{https://github.com/siqiaomu/r2d}.

\textbf{Datasets.} We consider two datasets: the eICU Collaborative Database, a large multi-center ICU database \cite{Pollard_Johnson_Raffa_Celi_Mark_Badawi_2018}, and the Lacuna-100 dataset, a subset of the VGGFace2 \cite{Cao18} dataset, which we construct using the MAAD-Face annotations \cite{TerhMaaD} and following the procedure in \cite{golatkar1}. 

The eICU dataset can be obtained after following the instructions at \url{https://eicu-crd.mit.edu/gettingstarted/access/}. We combine the \texttt{patient}, \texttt{apacheApsVar}, \texttt{apachePredVar}, and \texttt{apachePatientResult} tables to form our dataset, predicting whether the hospital length of stay is longer or shorter than a week with the following predictive variables: \texttt{["age", "intubated", "vent",
       "dialysis", "eyes", "motor", "verbal", "meds", "urine", "wbc",
       "temperature", "respiratoryrate", "sodium", "heartrate", "meanbp", "ph",
       "hematocrit", "creatinine", "albumin", "pao2", "pco2", "bun", "glucose",
       "bilirubin", "fio2", "gender", "admitsource", "admitdiagnosis",
       "thrombolytics", "aids", "hepaticfailure", "lymphoma",
       "metastaticcancer", "leukemia", "immunosuppression", "cirrhosis",
       "diabetes"]}

Code for preprocessing the Lacuna-100 dataset is available at the GitHub repository linked above. The MAAD-Face annotations are available at \url{https://github.com/pterhoer/MAAD-Face} and the VGG-Face dataset is available at \url{https://www.robots.ox.ac.uk/~vgg/data/vgg_face2/}.

\textbf{Model architecture.} We perform experiments on a multilayer perceptron with three hidden layers and a ResNet-18 deep neural network. Because our analysis requires smooth functions, we replace all ReLU activations with SmeLU activations \cite{shamirsmelu}. We also do not use Batch Normalization layers since we want to train with a constant step size. These steps may affect model performance but improve the soundness of our experiments by allowing estimation of the smoothness constant $L$.

\textbf{Gradient bound estimation.} To estimate the gradient bound $G$, we compute the norm of each mini-batch gradient at each step of the training process, and take the maximum of these values as the estimate. 

\textbf{Lipschitz constant estimation.} To estimate the Lipschitz constant $L$, we perturb the model weights after training by Gaussian noise with $\sigma = 0.01$, and we estimate the Lipschitz constant by computing 
$$\hat{L} = \frac{||\theta_1 - \theta_2||}{||\nabla f(\theta_1) - \nabla f(\theta_2)||}.$$
We sample $100$ perturbed weight samples and take the maximum of all the estimates $\hat{L}$ to be our Lipschitz constant.

\begin{table}[ht]
\centering
\caption{R2D Experiment parameters for the eICU and Lacuna-100 datasets.}
\begin{tabular}{lll}
\toprule
\textbf{Experiment Parameter}                  & \textbf{eICU and MLP} & \textbf{Lacuna-100 and ResNet-18} \\ 

Size of training dataset $n$        &94449      &32000        \\ 
Number of users &119282      &100        \\ 
Percent data unlearned &$\sim$ 1\%      &$\sim$ 2\% \\ 
Number of model parameters $d$ &136386      &11160258         \\ 
Batch size                 & 2048  & 512    \\ 
$L$                        &0.2065      &       \\ 
$G$                        &0.5946      &        \\ 
$\eta$                     &0.01     &   0.01    \\ 
Number of training epochs                     &78      &   270   \\ \bottomrule
\end{tabular}

\end{table}

\textbf{Model selection.} To simulate real-world practices, we perform model selection during the initial training on the full dataset by training until the validation loss converges, and then selecting the model parameters with the lowest validation loss. We treat the selected iteration as the final training iterate. 

\textbf{Unlearning metrics.} 
Many prior works solely use the error on the unlearned dataset as an unlearning metric, theorizing that the model should perform poorly on data it has never seen before. However, this is questionable in our context. For example, for Lacuna-100, unlearning a user's facial data does not preclude the model from correctly classifying their gender later, especially if the user's photos are clear and easily identifiable. Moreover, since the unlearned data contains a small subset of the users in the training data, it is also likely to have lower variance. As a result, the unlearned model may even perform better on the unlearned data than on the training data, which is shown to be true for the eICU dataset. We therefore consider the performance on the unlearned data before and after unlearning (as shown in Figure \ref{fig:puc}g).

\textbf{Hardware.} All experiments were run using PyTorch 2.5.1 and CUDA 12.4, either on an Intel(R) Xeon(R) Silver 4214 CPU (2.20GHz) with an NVIDIA GeForce RTX 2080 (12 GB) or on an Intel(R) Xeon(R) Silver 4208 CPU (2.10GHz) with an NVIDIA RTX A6000 GPU (48 GB). Almost all experiments require less than 8 GB of GPU VRAM, except for running the HF algorithm on the Lacuna-100 dataset, which requires at least 29 GB of GPU VRAM to implement the scaled-down version.

\textbf{Additional details.} We use the same Gaussian noise vector for unlearned R2D under the same seed, rescaled to different standard deviations. We found that using unique noise for every hyperparameter combination resulted in similar but noisier trends. When initialized with the same seed, the \texttt{torch.randn} method outputs the same noise vector rescaled by the standard deviation. We use a different seed for the original model perturbations.

\subsubsection{Membership Inference Attacks} 
\label{sec:MIA_deets}

\textbf{Implementation.} We conduct MIAs to assess if information about the unlearned data is retained in the model weights and reflected in the model outputs. Prior works typically perform the attack comparing the unlearned dataset and the test dataset, the latter of which represents data previously unseen by the model. However, in our setting, the model may perform well on users present in both the training data and the test data, while performing poorly on the users that are unlearned. It is therefore more appropriate to conduct the membership inference attack on out-of-distribution (OOD) data that contains data from \textit{users} absent from the training data. For the Lacuna-100 dataset, which has fewer users and many samples per user, we construct an OOD dataset using an additional 100 users from the VGGFace2 database. For the eICU dataset, which has many users and a few samples per user, we construct the OOD dataset from data from the test set belonging to users not present in the training set.

We perform two different types of MIAs: the classic MIA approach based on \cite{ShokriSS16}, and an advanced attack tailored to the unlearning setting based on \cite{chen_mia} (MIA-U). For both problem settings, we combine a subsample of the constructed OOD set with the unlearned set to form a 50-50 balanced training set for the attack model. We train a logistic regression model to perform binary classification, and we measure the success of the MIA by the attack model AUC on the data samples, computed via $k$-fold cross-validation ($k=5$) repeated $10$ times. For the classic MIA, we also repeat undersampling 100 times to stabilize the performance. We provide error bars for the MIA results in Appendix \ref{sec:numerical_results}. 

The classic MIA considers the output of the unlearned model on the data. For the eICU setting, we consider the loss on the data, and for Lacuna-100, we consider the loss and logits on the data. The MIA-U leverages both the outputs of the unlearned and the original model, computing either a difference or a concatenation of the two posteriors to pass to the attack model. We found that using the Euclidean distance was most effective at preventing overfitting and improving the performance of the attack, which aligns with the observations of \cite{chen_mia}. To study the effects of the MIA-U for certified unlearning, we add noise to both the original model and unlearned model before performing the attack. Both \cite{zhang2024towards} and \cite{qiao2024efficientgeneralizablecertifiedunlearning} address a weaker yet technically equivalent definition of certified unlearning, introduced in \cite{sekharirememberwhatyouwant}, that only considers indistinguishability with respect to the output of the unlearning algorithms, with or without certain data samples, as opposed to the definition used in our work, which considers indistinguishability with respect to the learning algorithm and the unlearning algorithm. As a result, neither \cite{zhang2024towards} nor \cite{qiao2024efficientgeneralizablecertifiedunlearning} require adding noise during the initial learning process, which may lead to a \textit{more} successful MIA. However, their theoretical results naturally extend to the strong definition used in our paper, so for a fair comparison, we add noise to the original models of the other certified unlearning algorithms. Despite this, our algorithm still outperforms other certified unlearning algorithms in defending against the attacks.

\textbf{Additional Discussion of MIAs.} Although MIAs are a standard tool for testing user privacy and unlearning, we observe that they are an imperfect metric. MIAs are highly sensitive to distributional differences due to the non-uniform sampling of the unlearned data, and this sensitivity is stronger when the model performs well. This may explain why the MIA performs well even for the fully retrained model, but is less successful on any of the certified methods that require adding uniform noise. It also explains why the attacks are weaker on the eICU dataset than on the Lacuna-100 dataset, since the model on the former dataset achieves a lower accuracy than  on the latter. 

On the flip side, MIAs can also react to distributional differences with deceptively good performance. For example, for very large $\sigma$, the perturbed model may perform poorly by classifying all data in a single class. If the class distribution in the unlearned dataset is different from the class distribution in the non-training dataset, the MIA may succeed by simply identifying the majority class in the unlearned dataset. To counter this effect, we ensure that the OOD dataset used for MIA and the unlearned dataset have the same class distribution. This nonlinear relationship is captured in the Classic MIA plot in Figure \ref{fig:puc}c, which shows that the Attack AUC is highest for very large or very small $\sigma$.

Ultimately, our results suggest that some caution is required when using MIAs to evaluate unlearning, especially in the real world where models may not be extremely well-performing or data may not be i.i.d. We find that the MIAs are generally less successful in our non-i.i.d. setting compared to the standard settings of other papers. A future direction of this research is developing a specialized MIA for the OOD setting presented in this paper.

\subsubsection{Baseline Methods} 
\label{sec:baselines}

To implement the certified baseline methods, we use our estimated values of $L$ and $G$ when applicable, and we choose the step size and batch size so that the learning algorithms are sufficiently converged for our problems. For other parameters, such as minimum Hessian eigenvalue or Hessian Lipschitz constant, we use the values in the original works. We trained each unique learning algorithm on each dataset until convergence before adding noise. Tables \ref{tab:eicu_nn_full} and \ref{tab:lacuna_nn_full} report the AUC of the noiseless trained models of the certified algorithms, showing that they achieve comparable performance prior to unlearning. Since HF requires $O(nd)$ storage (and $O(ndT)$ storage during training to compute the unlearning vectors), which is impractically large for our datasets and models, we follow their precedent and employ a scaled-down $O(md)$ version that only computes the vectors for the samples we plan to unlearn.

\begin{table}[h]
\small
\centering
\caption{Experiment parameters of HF and CNS for the eICU and Lacuna-100 datasets.}
\begin{tabular}{llll}
\toprule
 & \textbf{Experiment Parameter}                  & \textbf{eICU} & \textbf{Lacuna-100} \\ \hline

\textbf{Hessian-Free Unlearning} \\

 & Batch size      & 512  & 256    \\ 

 & $\eta_0$                     &0.1     &   0.1     \\ 
 & Step size decay                    &0.995      &   0.995   \\ 
 & Gradient norm clipping                &5      &   5   \\ 
 & Number of training epochs                     &15      &   25    \\ 
 & Optimizer & SGD & SGD\\
\hline
\textbf{Constrained Newton Step} \\

 & Batch size                 &128  & 128    \\ 
 & $\eta$                     &0.001     &   0.001     \\ 
 & Weight decay                     &0.0005     &   0.0005     \\ 
 & Parameter norm constraint $R$ &10      &   21   \\ 
 & Number of training epochs                     &30      &   30   \\ 
 & Convex constant $\lambda$                     &200      &   2,000   \\ 
 & Hessian scale $H$                     &50,000      &   50,000  \\ 
 & Optimizer & Adam & Adam \\
\bottomrule
\end{tabular}
\label{tab:hfcnsparameters}
\end{table}

Table \ref{tab:hfcnsparameters} shows the parameters used for implementing the certified unlearning baselines. Additional information can be found in the GitHub repository. The implementation of \cite{zhang2024towards} is based on code from \url{https://github.com/zhangbinchi/certified-deep-unlearning}, and the implementation of \cite{qiao2024efficientgeneralizablecertifiedunlearning} is based on code from \url{https://github.com/XinbaoQiao/Hessian-Free-Certified-Unlearning}.

The implementations of the non-certified baseline methods are also available in the GitHub repository. The hyperparameters for these methods are as follows:
\begin{itemize}
    \item Finetune: 10 epochs
    \item Fisher Forgetting: alpha=1e-6 for eICU, alpha=1e-8
    \item SCRUB: See the following notebooks in the GitHub repository for full SCRUB hyperparameters, including \texttt{lacuna\_baseline\_implementation.ipynb} and \texttt{eicu\_baseline\_implementation.ipynb}.
\end{itemize}

\subsection{Computation Time Experiments}
\label{sec:computation}

Because we desire machine unlearning algorithms that achieve a computational advantage over training from scratch, it is useful to compare the amount of time required for learning and unlearning for different algorithms. The results in Table \ref{tab:eicu_time} and \ref{tab:lacuna_time} demonstrate that R2D unlearning is significantly more computationally efficient than HF and is competitive compared to CNS. These results were achieved on an NVIDIA RTX A6000 GPU (48 GB). As expected based on its straightforward algorithmic structure, the unlearning time of R2D is proportional to the number of rewind iterations.

Although these results depend on our implementation of the learning algorithms of HF and CNS, we observe that as shown in Table \ref{tab:hfcnsparameters}, we use fewer training iterations for HF compared to R2D. Despite this, HF requires \textit{22-88 times more} compute time during learning, highlighting the benefits of black-box algorithms that dovetail with standard training practices and do not require complicated data-saving procedures. Similarly, CNS also requires fewer training iterations than R2D because it allows the use of advanced optimization techniques like momentum and regularization. However, because CNS unlearning requires an expensive second-order operation independent of the training process, it only displays a moderate computational advantage on the Lacuna-100 dataset and no  advantage on the eICU dataset. In contrast, by construction R2D unlearning will \textit{always} require less computation than learning, demonstrating the benefits of a cheap first-order approach.

\begin{table}[H]
\centering
\caption{Comparison of computation time on the eICU dataset, with $23 \%$ or $49\%$ of training iterations for R2D. }
\begin{tabular}{lll}
\toprule
\textbf{Algorithm} & \textbf{Learning Time} & \textbf{Unlearning Time} \\ \hline
Rewind-to-Delete (R2D), 23\%               &  230.76 sec     &  58.48 sec             \\ 
Rewind-to-Delete (R2D), 49\%               &  230.76 sec    &   113.69 sec           \\ 
Hessian-Free (HF)                 & 5.65 hours      &  0.01 sec                   \\ 
Constrained Newton Step (CNS)               & 266.0 sec      & 293.6 sec              \\ \bottomrule
\end{tabular}
\label{tab:eicu_time}
\end{table}

\begin{table}[H]
\centering
\caption{Comparison of computation time on the Lacuna-100 dataset, with $19 \%$ or $44\%$ of training iterations for R2D.}
\begin{tabular}{lll}
\toprule
\textbf{Algorithm} & \textbf{Learning Time} & \textbf{Unlearning Time} \\ \hline
Rewind-to-Delete (R2D), 19\%               & 4.57 hours     &  0.91 hours            \\ 
Rewind-to-Delete (R2D), 44\%               & 4.57 hours    &  2.06 hours           \\ 
Hessian-Free (HF)                 & 4.23 days      &  0.00 hours                   \\ 
Constrained Newton Step (CNS)               & 0.58 hours     & 0.32 hours             \\ \bottomrule
\end{tabular}
\label{tab:lacuna_time}
\end{table}

As for storage, we note that even with the stripped-down version of HF, it still requires 1.5 GB of storage for the eICU dataset and 136GB for the Lacuna-100 dataset during training, whereas R2D only needs to store the model weights at a checkpoint and the final iterate ($2 \times 8$) MB for eICU and ($2 \times 45$ MB for Lacuna-100).

\subsection{Proximal Point Experiments}
\label{sec:prox_exp}

We implement the proximal point method (Algorithm \ref{alg:prox}) and evaluate its ability to reconstruct prior checkpoints. Our results are shown in Table \ref{tab:proxpoint_reconstruct}. We measure the success by the Euclidean distance between the original checkpoint and the reconstructed checkpoint, observing that they are close for small $K$ but grow apart as we rewind past half the original training trajectory, due to compounding approximation errors that ultimately leads to instability in convergence. Taking into account the model dimension, our results show that we can quite faithfully reconstruct checkpoints for moderate $K$.

We also evaluated the computational cost of Algorithm \ref{alg:prox} and found that in most cases, reconstructing the checkpoint has a minor impact on the overall runtime. In Tables \ref{tab:prox_eicu} and \ref{tab:prox_lacuna}, we include the time it takes to reconstruct the checkpoint into the learning time. For eICU, we see that small amounts of rewinding require minimal additional computation, while more rewinding takes slightly longer due to instability slowing down convergence. For Lacuna-100, the cost of Algorithm \ref{alg:prox} is small compared to the cost of learning and unlearning, due to a difference in batch size.

We note that the precomputation learning time of R2D is still vastly superior to that of the Hessian-Free method, the state-of-the-art second-order certified unlearning method. Moreover, by design, R2D (with checkpointing) will always cost less than retraining from scratch. This stands in contrast to CNS, which has a fixed computation cost that is not necessarily more efficient than retraining.

\begin{table}[h]
\centering
\caption{Checkpoint reconstruction for eICU and Lacuna-100 datasets.}
\label{tab:proxpoint_reconstruct}
\begin{tabular}{@{}cccc@{}}
\toprule
\multicolumn{2}{c}{\textbf{eICU (136k parameters)}} &
\multicolumn{2}{c}{\textbf{Lacuna-100 (11 million parameters)}} \\
\cmidrule(lr){1-2} \cmidrule(l){3-4}
\textbf{Rewind Percent} & \textbf{Euclidean Distance} &
\textbf{Rewind Percent} & \textbf{Euclidean Distance} \\
\midrule
10.26\% & 0.0650 & 7.41\%  & 1.174 \\
23.08\% & 0.2921 & 18.51\% & 2.737 \\
35.90\% & 0.9977 & 25.92\% & 3.187 \\
48.71\% & 2.924  & 44.44\% & 4.464 \\
\bottomrule
\end{tabular}
\end{table}

\begin{table}[h]
    \centering
    \caption{Learning times with checkpoint reconstruction (Algorithm \ref{alg:prox}) on the eICU dataset.}
    \begin{tabular}{cc}
    \toprule
         \textbf{Algorithm} & \textbf{Learning Time} \\ \hline
         R2D with checkpoint reconstruction, 23 \% & 231.33 sec \\
         R2D with checkpoint reconstruction, 49 \% & 337.18 sec \\ \bottomrule
    \end{tabular}
    \label{tab:prox_eicu}
\end{table}

\begin{table}[h]
    \centering
    \caption{Learning times with checkpoint reconstruction (Algorithm \ref{alg:prox}) on the Lacuna-100 dataset.}
    \begin{tabular}{cc}
    \toprule
         \textbf{Algorithm} & \textbf{Learning Time} \\ \hline
         R2D with checkpoint reconstruction, 19 \% & 4.57 hours +123.62 sec \\
         R2D with checkpoint reconstruction, 44 \% & 4.57 hours +280 sec \\ \bottomrule
    \end{tabular}
    \label{tab:prox_lacuna}
\end{table}

\subsection{LLM Experiments}

We note that LLM unlearning is a rich subfield with its own specific experiment designs, evaluation metrics, and membership attacks. In this section, we provide a preliminary experiment to demonstrate the potential benefits of rewinding as opposed to ``descending," i.e. continuing directly from the final model parameters, for LLMs. The code is available in the GitHub repository. 

We finetune an open-source LLM (Mistral-7B-Instruct-v0.2 \cite{mistral7b_instruct_v0.2}) using LoRA \cite{hulora} on the Alpaca dataset \cite{alpaca}. We treat the math-related tasks in Alpaca as the unlearned set $\mathcal{D}_{unlearn}$. We generate the following sets of model parameters:

\begin{enumerate}
    \item Learned Model: We fine-tune the off-the-shelf model for three epochs on the whole Alpaca dataset.
    \item "Retraining" Baseline: We fine-tune the off-the-shelf model for three epochs on only the non-math tasks. This simulates the "retrain from scratch" unlearning baseline.
    \item "Finetuning" Baseline: We fine-tune for one more epoch starting from the Learned Model on the non-math tasks. This simulates the "finetuning" or D2D unlearning baseline. We note there is some overlap in terminology; in unlearning literature, "finetuning" refers to starting from the trained model and performing more steps on the loss function of the retained data.
    \item R2D: We rewind to the first or second checkpoints from the Learned Model fine-tuning process, and proceed for two or one more epochs on the non-math tasks.
\end{enumerate}

In Table \ref{tab:bleu}, we compare the performance (BLEU) of these models on $\mathcal{D}_{unlearn}$
 and the computation cost of unlearning. We observe that both R2D methods outperform Finetuning, and in particular R2D 33\% performs better while using the same amount of computation. This further supports our claim that rewinding is better than direct finetuning for unlearning on nonconvex functions.

\begin{table}[H]
    \centering
    \begin{tabular}{ccc}
    \toprule
         \textbf{Algorithm} & \textbf{BLEU} & \textbf{Unlearning Epochs }\\
    \hline
         Learned Model & 0.1576 & N/A\\
         "Retraining" Baseline & 0.0935 & 3 \\
         "Finetuning" Baseline & 0.1035 &	1 \\
         R2D 33\% & 0.1009 & 1 \\
         R2D 67\% & 0.1020 & 2 \\
    \bottomrule
    \end{tabular}
    \caption{Experiments with LoRA finetuning on an LLM.}
    \label{tab:bleu}
\end{table}

\subsection{Additional Results}
\label{sec:numerical_results}

\begin{figure}[]
    \centering
    \includegraphics[width=0.5\linewidth]{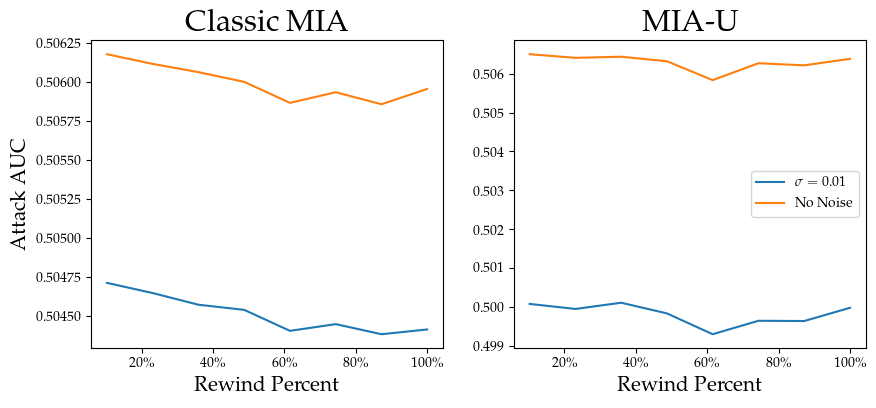}
    \includegraphics[width=0.5\linewidth]{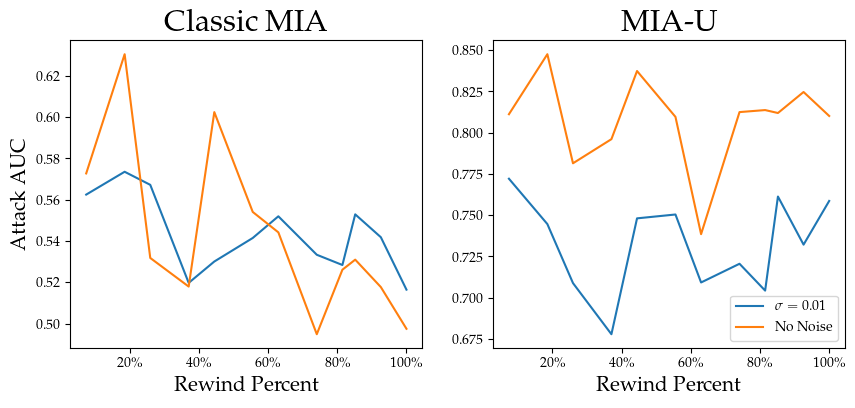}
    \caption{MIA scores for the eICU dataset (top) and the Lacuna-100 dataset (bottom).}
    \label{fig:full_mia_nn_flat}
\end{figure}

In this section, we provide additional experimental results, including full version of Tables \ref{tab:combined_auc_no_ood} and \ref{tab:combined_mia} in Tables \ref{tab:eicu_flat_full} and \ref{tab:lacuna_flat_full}. For additional comparison purposes, we also provide metrics computed on the \textit{noiseless} version of the certified unlearning methods in Tables \ref{tab:eicu_nn_full} and \ref{tab:lacuna_nn_full}. We include the standard deviation (1-sigma error) of the MIA scores over all $k$-fold cross-validation trials.

Figure \ref{fig:full_mia_nn_flat} shows the MIA scores decreasing as the rewind percent increases, demonstrating that the attacks become less successful with more unlearning steps. This suggests that the rewinding step is important for erasing information about the data even without noise disguising the output of the unlearning algorithm. Nonetheless, Figure \ref{fig:full_mia_nn_flat} also shows that a small perturbation consistently reduces the effectiveness of the membership attack. Finally, we observe that the MIA are generally more successful on the Lacuna-100 dataset than on the eICU dataset. This may be because the eICU dataset is noisier and the models are trained to a lower level of accuracy, leading to errors in classification that that are passed to the attack model.

Finally, we plot t-SNE feature representations of the model outputs in Figure \ref{fig:tsne} before and after unlearning, observing that the unlearned samples are indeed more dispersed after unlearning. 

\begin{figure}[H]
    \centering
    \includegraphics[width=0.3\linewidth]{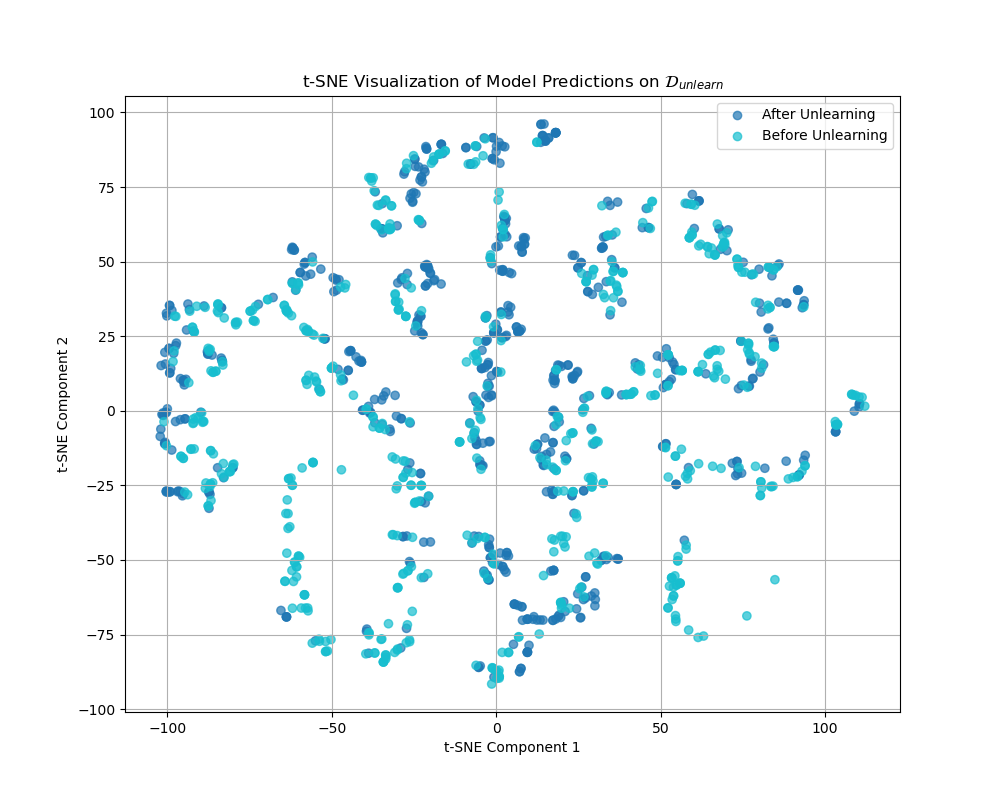}
    \includegraphics[width=0.3\linewidth]{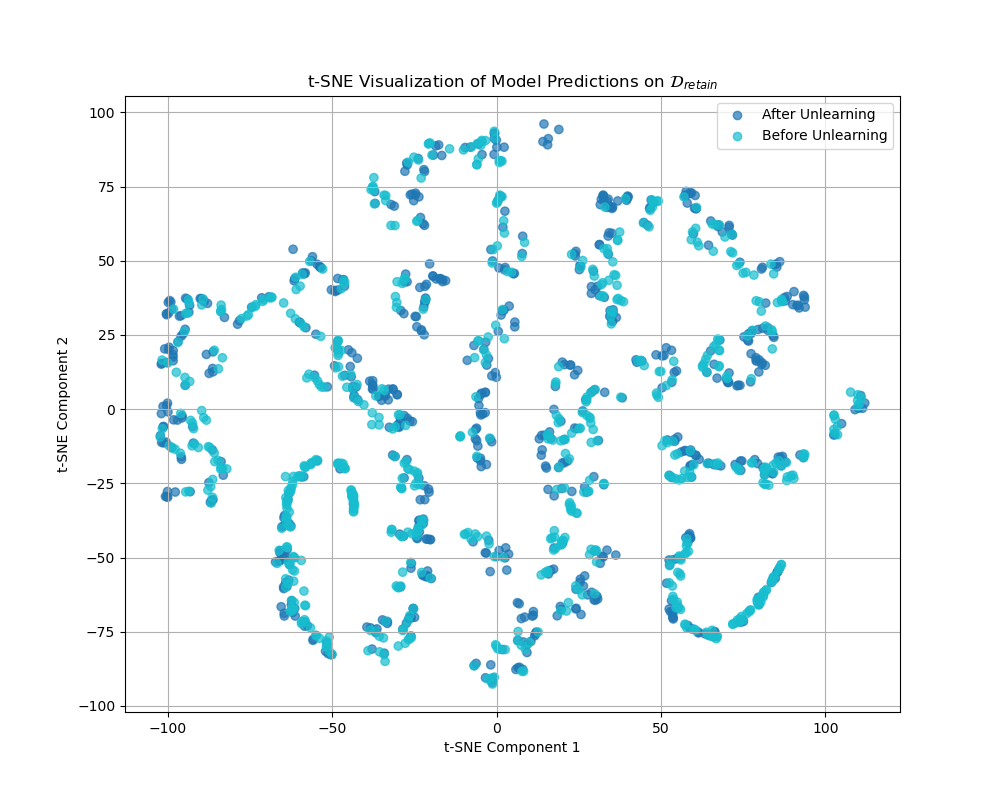}
    \caption{Left: t-SNE representations on $\mathcal{D}_{retain}$. Right: t-SNE representations on $\mathcal{D}_{unlearn}$.}
    \label{fig:tsne}
\end{figure}

\begin{table}[]
    \small
    \centering
    \caption{Comparison of Unlearning Algorithms on the eICU Dataset}
\begin{tabular}{lcccccc}
\toprule
 & \multicolumn{4}{c}{Model AUC} & \multicolumn{2}{c}{MIA AUC} \\
\cmidrule(lr){2-5} \cmidrule(lr){6-7}
Algorithm & $\mathcal{D}_{ood}$ & $\mathcal{D}_{retain}$ & $\mathcal{D}_{unlearn}$ & $\mathcal{D}_{test}$ & Classic MIA & MIA-U \\
\midrule
\textbf{Original Models} \\
HF  & 0.7365 & 0.7490 & 0.7614 & 0.7472 & &  \\
CNS  & 0.7519 & 0.7628 & 0.7860 & 0.7602 &  &  \\
R2D (with noise) & 0.7207 & 0.7329 & 0.7458 & 0.7316 &  &  \\

\midrule
\textbf{Certified Methods ($\sigma = 0.01$)} \\
HF & 0.7361 & 0.7476 & 0.7587 & 0.7465 & 0.5078{\tiny$\pm$0.0260} & 0.5108{\tiny$\pm$0.0202} \\
CNS & 0.7517 & 0.7622 & 0.7848 & 0.7601 & 0.5145{\tiny$\pm$0.0283} & 0.5144{\tiny$\pm$0.0252} \\
R2D 10\% & 0.7220 & 0.7335 & 0.7444 & 0.7327 & 0.5047{\tiny$\pm$0.0283} & 0.5001{\tiny$\pm$0.0255} \\
R2D 23\% & 0.7220 & 0.7334 & 0.7442 & 0.7327 & 0.5046{\tiny$\pm$0.0283} & 0.5000{\tiny$\pm$0.0255} \\
R2D 36\% & 0.7220 & 0.7334 & 0.7441 & 0.7327 & 0.5046{\tiny$\pm$0.0283} & 0.5001{\tiny$\pm$0.0255} \\
R2D 49\% & 0.7219 & 0.7334 & 0.7441 & 0.7326 & 0.5045{\tiny$\pm$0.0283} & 0.4998{\tiny$\pm$0.0255} \\
R2D 74\% & 0.7219 & 0.7333 & 0.7439 & 0.7326 & 0.5044{\tiny$\pm$0.0283} & 0.4997{\tiny$\pm$0.0255} \\
R2D 87\% & 0.7218 & 0.7333 & 0.7438 & 0.7325 & 0.5044{\tiny$\pm$0.0283} & 0.4996{\tiny$\pm$0.0254} \\
R2D 100\% & 0.7217 & 0.7331 & 0.7437 & 0.7323 & 0.5044{\tiny$\pm$0.0283} & 0.5000{\tiny$\pm$0.0253} \\

\midrule 
\textbf{Noiseless Retrain} (R2D 100\%) & 0.7210 & 0.7335 & 0.7447 & 0.7321 & 0.5060{\tiny$\pm$0.0285} & 0.5064{\tiny$\pm$0.0241} \\
\midrule
\textbf{Non-certified Methods} \\

Finetune & 0.7225 & 0.7352 & 0.7463 & 0.7337 & 0.5063{\tiny$\pm$0.0286} & 0.5066{\tiny$\pm$0.0248} \\
Fisher Forgetting & 0.7201 & 0.7310 & 0.7482 & 0.7302 & 0.5114{\tiny$\pm$0.0289} & 0.5102{\tiny$\pm$0.0221} \\
SCRUB & 0.7211 & 0.7336 & 0.7450 & 0.7322 & 0.5060{\tiny$\pm$0.0285} & 0.5056{\tiny$\pm$0.0247} \\
\bottomrule
\end{tabular}
\label{tab:eicu_flat_full}
\end{table}

\begin{table}[]
    \small
    \centering
    \caption{Certified Unlearning Algorithms without Noise ($\epsilon = \infty$) on the eICU Dataset}
\begin{tabular}{lcccccc}
\toprule
 & \multicolumn{4}{c}{Model AUC} & \multicolumn{2}{c}{MIA AUC} \\
\cmidrule(lr){2-5} \cmidrule(lr){6-7}
Algorithm & $\mathcal{D}_{ood}$ & $\mathcal{D}_{retain}$ & $\mathcal{D}_{unlearn}$ & $\mathcal{D}_{test}$ & Classic MIA & MIA-U \\
\midrule
\textbf{Original Models} \\
HF  & 0.7365 & 0.7490 & 0.7614 & 0.7472 & &  \\
CNS  & 0.7519 & 0.7628 & 0.7860 & 0.7602 &  &  \\
R2D & 0.7211 & 0.7337 & 0.7451 & 0.7322 & & \\
\midrule
\textbf{Certified Methods ($\sigma = 0$)} \\
HF & 0.7365 & 0.7490 & 0.7612 & 0.7472 & 0.5078{\tiny$\pm$0.0260} & 0.5135{\tiny$\pm$0.0195} \\
CNS & 0.7521 & 0.7630 & 0.7862 & 0.7604 & 0.5143{\tiny$\pm$0.0280} & 0.5138{\tiny$\pm$0.0254} \\
R2D 10\% & 0.7214 & 0.7339 & 0.7452 & 0.7325 & 0.5062{\tiny$\pm$0.0286} & 0.5065{\tiny$\pm$0.0243} \\
R2D 23\% & 0.7213 & 0.7338 & 0.7450 & 0.7324 & 0.5061{\tiny$\pm$0.0286} & 0.5064{\tiny$\pm$0.0244} \\
R2D 36\% & 0.7213 & 0.7338 & 0.7450 & 0.7324 & 0.5061{\tiny$\pm$0.0286} & 0.5064{\tiny$\pm$0.0242} \\
R2D 49\% & 0.7212 & 0.7338 & 0.7449 & 0.7324 & 0.5060{\tiny$\pm$0.0285} & 0.5063{\tiny$\pm$0.0244} \\
R2D 74\% & 0.7212 & 0.7337 & 0.7449 & 0.7323 & 0.5059{\tiny$\pm$0.0285} & 0.5063{\tiny$\pm$0.0244} \\
R2D 87\% & 0.7212 & 0.7337 & 0.7448 & 0.7323 & 0.5059{\tiny$\pm$0.0285} & 0.5062{\tiny$\pm$0.0244} \\
\textbf{Noiseless Retrain} (R2D 100\%) & 0.7210 & 0.7335 & 0.7447 & 0.7321 & 0.5060{\tiny$\pm$0.0285} & 0.5064{\tiny$\pm$0.0241} \\
\bottomrule
\end{tabular}
\label{tab:eicu_nn_full}
\end{table}

\begin{table}[]
    \centering
    \small
    \caption{Comparison of Unlearning Algorithms on the Lacuna-100 Dataset}
\begin{tabular}{lcccccc}
\toprule
 & \multicolumn{4}{c}{Model AUC} & \multicolumn{2}{c}{MIA AUC} \\
\cmidrule(lr){2-5} \cmidrule(lr){6-7}
Algorithm & $\mathcal{D}_{ood}$ & $\mathcal{D}_{retain}$ & $\mathcal{D}_{unlearn}$ & $\mathcal{D}_{test}$ & Classic MIA & MIA-U \\
\midrule
\textbf{Original Models} \\
HF  & 0.9392 & 0.9851 & 0.9746 & 0.9737 & & \\
CNS  & 0.9497 & 0.9983 & 0.9956 & 0.9856 & & \\
R2D (with noise) & 0.9106 & 0.9719 & 0.9742 & 0.9591 & & \\
\midrule
\textbf{Certified Methods ($\sigma = 0.01$)} \\
HF & 0.9327 & 0.9757 & 0.9555 & 0.9652 & 0.4949{\tiny$\pm$0.0381} & 0.8461{\tiny$\pm$0.0179} \\
CNS & 0.9475 & 0.9977 & 0.9949 & 0.9847 & 0.6023{\tiny$\pm$0.0346} & 0.8482{\tiny$\pm$0.0217} \\
R2D 7\% & 0.9090 & 0.9606 & 0.9238 & 0.9478 & 0.5625{\tiny$\pm$0.0351} & 0.7721{\tiny$\pm$0.0211} \\
R2D 19\% & 0.9120 & 0.9728 & 0.9407 & 0.9559 & 0.5735{\tiny$\pm$0.0347} & 0.7446{\tiny$\pm$0.0277} \\
R2D 26\% & 0.8889 & 0.9313 & 0.8580 & 0.9266 & 0.5672{\tiny$\pm$0.0350} & 0.7089{\tiny$\pm$0.0278} \\
R2D 37\% & 0.9055 & 0.9683 & 0.9269 & 0.9525 & 0.5197{\tiny$\pm$0.0354} & 0.6779{\tiny$\pm$0.0341} \\
R2D 44\% & 0.9151 & 0.9752 & 0.9290 & 0.9621 & 0.5300{\tiny$\pm$0.0342} & 0.7481{\tiny$\pm$0.0245} \\
R2D 56\% & 0.8897 & 0.9634 & 0.9215 & 0.9440 & 0.5414{\tiny$\pm$0.0342} & 0.7504{\tiny$\pm$0.0247} \\
R2D 63\% & 0.9018 & 0.9594 & 0.8411 & 0.9426 & 0.5519{\tiny$\pm$0.0358} & 0.7092{\tiny$\pm$0.0290} \\
R2D 74\% & 0.9082 & 0.9712 & 0.8998 & 0.9534 & 0.5333{\tiny$\pm$0.0342} & 0.7206{\tiny$\pm$0.0257} \\
R2D 81\% & 0.8955 & 0.9648 & 0.9059 & 0.9463 & 0.5284{\tiny$\pm$0.0343} & 0.7043{\tiny$\pm$0.0213} \\
R2D 85\% & 0.9010 & 0.9672 & 0.9072 & 0.9509 & 0.5529{\tiny$\pm$0.0336} & 0.7613{\tiny$\pm$0.0233} \\
R2D 93\% & 0.9136 & 0.9701 & 0.8814 & 0.9528 & 0.5418{\tiny$\pm$0.0339} & 0.7322{\tiny$\pm$0.0216} \\
R2D 100\% & 0.8932 & 0.9589 & 0.8868 & 0.9407 & 0.5164 & 0.7586 \\
\midrule 
\textbf{Noiseless Retrain} (R2D 100\%) & 0.9406 & 1.0000 & 0.9460 & 0.9840 & 0.4974{\tiny$\pm$0.0343} & 0.8101{\tiny$\pm$0.0262} \\
\midrule
\textbf{Non-certified Methods} \\
Finetune & 0.9403 & 0.9997 & 0.9975 & 0.9854 & 0.6013{\tiny$\pm$0.0335} & 0.8497{\tiny$\pm$0.0233} \\
Fisher Forgetting & 0.9398 & 0.9982 & 0.9986 & 0.9831 & 0.5814{\tiny$\pm$0.0342} & 0.8735{\tiny$\pm$0.0191} \\
SCRUB & 0.9391 & 0.9989 & 0.9999 & 0.9845 & 0.6179{\tiny$\pm$0.0331} & 0.8586{\tiny$\pm$0.0239} \\
\bottomrule
\end{tabular}
\label{tab:lacuna_flat_full}
\end{table}

\begin{table}[]
    \centering
    \small
    \caption{Certified Unlearning Algorithms without Noise ($\epsilon = \infty$) on the Lacuna-100 Dataset}
\begin{tabular}{lcccccc}
\toprule
 & \multicolumn{4}{c}{Model AUC} & \multicolumn{2}{c}{MIA AUC} \\
\cmidrule(lr){2-5} \cmidrule(lr){6-7}
Algorithm & $\mathcal{D}_{ood}$ & $\mathcal{D}_{retain}$ & $\mathcal{D}_{unlearn}$ & $\mathcal{D}_{test}$ & Classic MIA & MIA-U \\
\midrule
\textbf{Original Models} \\
HF  & 0.9392 & 0.9851 & 0.9746 & 0.9737 &  & \\
CNS  & 0.9497 & 0.9983 & 0.9956 & 0.9856 &  & \\
R2D & 0.9391 & 0.9989 & 0.9993 & 0.9844 &  &  \\
\midrule
\textbf{Certified Methods ($\sigma = 0$)} \\
HF & 0.9393 & 0.9852 & 0.9744 & 0.9738 & 0.4949{\tiny$\pm$0.0381} & 0.8360{\tiny$\pm$0.0185} \\
CNS & 0.9497 & 0.9984 & 0.9957 & 0.9857 & 0.5896{\tiny$\pm$0.0349} & 0.9674{\tiny$\pm$0.0084} \\
R2D 7\% & 0.9307 & 0.9903 & 0.9677 & 0.9741 & 0.5727{\tiny$\pm$0.0333} & 0.8112{\tiny$\pm$0.0239} \\
R2D 19\% & 0.9379 & 0.9992 & 0.9732 & 0.9828 & 0.6305{\tiny$\pm$0.0337} & 0.8475{\tiny$\pm$0.0229} \\
R2D 26\% & 0.9179 & 0.9582 & 0.8833 & 0.9506 & 0.5318{\tiny$\pm$0.0398} & 0.7815{\tiny$\pm$0.0243} \\
R2D 37\% & 0.9317 & 0.9959 & 0.9598 & 0.9777 & 0.5179{\tiny$\pm$0.0354} & 0.7961{\tiny$\pm$0.0265} \\
R2D 44\% & 0.9368 & 0.9980 & 0.9518 & 0.9815 & 0.6024{\tiny$\pm$0.0335} & 0.8374{\tiny$\pm$0.0216} \\
R2D 56\% & 0.9406 & 1.0000 & 0.9567 & 0.9860 & 0.5541{\tiny$\pm$0.0335} & 0.8096{\tiny$\pm$0.0245} \\
R2D 63\% & 0.9260 & 0.9917 & 0.8870 & 0.9720 & 0.5442{\tiny$\pm$0.0336} & 0.7385{\tiny$\pm$0.0267} \\
R2D 74\% & 0.9387 & 1.0000 & 0.9381 & 0.9856 & 0.4948{\tiny$\pm$0.0352} & 0.8125{\tiny$\pm$0.0262} \\
R2D 81\% & 0.9391 & 0.9999 & 0.9486 & 0.9843 & 0.5261{\tiny$\pm$0.0348} & 0.8137{\tiny$\pm$0.0262} \\
R2D 85\% & 0.9337 & 0.9987 & 0.9475 & 0.9827 & 0.5309{\tiny$\pm$0.0361} & 0.8119{\tiny$\pm$0.0253} \\
R2D 93\% & 0.9389 & 0.9968 & 0.9385 & 0.9801 & 0.5177{\tiny$\pm$0.0352} & 0.8246{\tiny$\pm$0.0194} \\

\textbf{Noiseless Retrain} (R2D 100\%) & 0.9406 & 1.0000 & 0.9460 & 0.9840 & 0.4974{\tiny$\pm$0.0343} & 0.8101{\tiny$\pm$0.0262} \\

\bottomrule
\end{tabular}
\label{tab:lacuna_nn_full}
\end{table}

\section{Limitations}
\label{sec:limitations}
In this section, we discuss some limitations of our work. First, we observe that while theoretically we can achieve any level of noise and privacy by controlling $K$, the number of unlearning steps, the value of $\sigma$ decreases very slowly with more rewinding, as shown in Figure \ref{fig:puc}f. This stands in contrast to results in convex unlearning, where the noise may decay exponentially with $K$. See Appendix \ref{sec:comparison} for more discussion.

Second, since in practice full-batch gradient descent is too computationally demanding for all but the smallest datasets, we closely approximate this setting using mini-batch gradient descent with a very large batch size. This approximation and others (like estimation of $L$ and $G$) introduce some imprecision into the level of certification, although the general principle of achieving probabilistic indistinguishability via Gaussian perturbation still holds. In addition, the large batch size slows down training, as the optimizer takes a long time to escape saddle points and converge to well-generalizing minima. Moreover, because we require constant step size, we also lose the benefits of advanced optimization techniques such as Adam, momentum, or batch normalization, all of which improve convergence speed and model performance. Developing certified unlearning algorithms for stochastic gradient descent and adaptive learning rates is a future direction of our work.

Third, we note that in the experimental settings we consider, we only achieve reasonable model utility at very high levels of $\epsilon$, which may not be used in practice. This is true for all certified unlearning algorithms considered in this paper, as shown in Figure \ref{fig:puc}b and \ref{fig:puc}e. Therefore in this regime, $\epsilon$ just represents a relative calibration of privacy rather than a specific theoretical meaning. 

Finally, we note that while certified unlearning methods do leverage Gaussian perturbation to defend against membership inference attacks successfully, this comes at a cost to model performance. The non-certified algorithms tend have higher MIA scores but also better model performance.

\section{Broader Impacts}
\label{sec:broaderimpacts}
This paper expands on existing work in machine unlearning, which has the potential to improve user privacy and remove the influence of corrupted, biased, or mislabeled data from machine learning models. However, additional investigation may be necessary to determine the effectiveness of these algorithms in the real world.

\section{Additional Discussion of Related Work}
\label{sec:comparison}

To review, our algorithm is a first-order, black-box algorithm that provides $(\epsilon, \delta)$ unlearning while also maintaining performance and computational efficiency. In this section, we provide an in-depth comparison our algorithmic differences with existing convex and nonconvex certified unlearning algorithms (Table \ref{tab:unlearning}). These algorithms all involve injecting some (Gaussian) noise to render the algorithm outputs probabilistically indistinguishable, whether it is added to the objective function, to the final model weights, or at each step of the training process.

\begin{table}[h]
    \centering
\begin{tabular}{| >{\centering\arraybackslash} m{2cm} | >{\centering\arraybackslash} m{8.5cm} | >{\centering\arraybackslash} m{2cm} |}
    \hline
    \textbf{Algorithm} & \textbf{Standard Deviation of Injected Gaussian Noise} &\textbf{ Bounded} $\lVert \theta \rVert < R$? \\ \hline
    Newton Step \cite{Guo} & \vspace{0.3cm} 
    \( \sigma = \frac{4 L G^2 \sqrt{2 \log(1.5/\delta)} }{ \lambda^2 (n-1) \epsilon} \)
    \vspace{0.3cm} & No \\ \hline
    Descent-to-delete (D2D) \cite{neel21a} & 
    \vspace{0.3cm} 
    \(\displaystyle \sigma  = \frac{4\sqrt{2} M (\frac{L - \lambda}{L + \lambda})^K }{ \lambda n (1 - (\frac{L - \lambda}{L + \lambda})^K)( \sqrt{\log(1/\delta) + \epsilon} - \sqrt{\log(1/\delta)})} \)
    \vspace{0.3cm} & No \\ \hline
    Langevin Unlearning \cite{chien2024langevin} & 
    \vspace{0.3cm}
    No closed-form solution for $\sigma$ in terms of $\epsilon, \delta$ 
    \vspace{0.3cm}
     & Yes \\ \hline
    Constrained Newton Step (CNS) \cite{zhang2024towards} & 
    \vspace{0.3cm} 
    \(\displaystyle \sigma = \frac{\left(\frac{2R(P R + \lambda)}{\lambda + \lambda_{min}} + \frac{32 \sqrt{\log(d/\rho)}}{\lambda + \lambda_{min}} + \frac{1}{8}\right) LR \sqrt{2\log(1.25/\delta)}}{\epsilon} \)
    \vspace{0.3cm} & Yes\\ \hline 
    Hessian-Free unlearning (HF) \cite{qiao2024efficientgeneralizablecertifiedunlearning} & 
    \vspace{0.3cm} 
    \(\displaystyle \sigma = 2 \eta G \frac{\sqrt{2 \log(1.25/\delta)}}{\epsilon} \zeta_T^{-U} \)
    \vspace{0.3cm} & No\\ \hline
    \textbf{Our Work (R2D)} & 
    \vspace{0.3cm} 
    \(\displaystyle \sigma  = \frac{ 2 m G\cdot h(K) \sqrt{2 \log(1.25/\delta)}}{ Ln \epsilon}\)
    \vspace{0.3cm} & No \\ \hline
\end{tabular}

\caption{Comparison of certified unlearning algorithms and their noise guarantees. We denote $K$ as the number of unlearning iterates, $\lambda$ as the regularization constant or strongly convex parameter, $M$ as the Lipschitz continuity parameter, and $d$ as the model parameter dimension. In addition, $R$ is the parameter norm constraint. For \cite{zhang2024towards}, $P$ represents the Lipschitz constant of the Hessian, $\lambda_{min}$ represents the minimum eigenvalue of the Hessian, and $\rho$ represents a probability less than $1$. For \cite{qiao2024efficientgeneralizablecertifiedunlearning}, $\zeta_T^{-U}$ is a constant dependent on the upper and lower bounds of the Hessian spectrum that grows with the number of learning iterates $T$ for nonconvex objective functions.}
    \label{tab:unlearning}
\end{table}

We are interested in comparing the noise-unlearning tradeoff of each of these algorithms. The second column of Table \ref{tab:unlearning} lists the standard deviation $\sigma$ in terms of $\epsilon$, $\delta$, and other problem parameters for the other certified unlearning algorithms. Our result is analogous to that of \cite{neel21a}, due to algorithmic similarities such as weight perturbation at the end of training. However, the required noise in \cite{neel21a} decays exponentially with unlearning iterations, while our noise decreases more slowly with increasing $K$ for our algorithm. This difference is because \cite{neel21a} considers strongly convex functions, where trajectories are attracted to a global minimum.

The third column of Table \ref{tab:unlearning} highlights an additional advantage our algorithm has over existing approaches. Our theoretical guarantees do not require a uniform bound on the model weights $||\theta|| \leq R$, nor does $\sigma$ depend on $R$, in contrast to \cite{chien2024langevin} and \cite{zhang2024towards}. Both \cite{zhang2024towards, chien2024langevin} require a bounded feasible parameter set, which they leverage to provide a loose bound on the distance between the retraining and unlearning outputs dependent on the parameter set radius $R$. However, this yields excessive noise requirements, with the standard deviation $\sigma$ scaling linearly or polynomially with $R$. 

Our work shares similarities with \cite{chien2024langevin}, a first-order white-box algorithm that uses noisy projected gradient descent to achieve certified unlearning, leveraging the existence and uniqueness of the limiting distribution of the training process as well as the boundedness of the projection set.  Their guarantee shows that the privacy loss $\epsilon$ decays exponentially with the number of unlearning iterates. 
However, $\sigma$, the noise added at each step, is defined implicitly with no closed-form solution. We can only assert that for a fixed number of iterations $K$, $\sigma$ must be at least $O(R)$ to obtain $\epsilon = O(1)$. Because $\sigma$ cannot be defined explicitly, it is difficult to implement this algorithm for a desired $\epsilon$. For example, when performing experiments for the strongly convex setting, which is a simpler mathematical expression, they require an additional subroutine to find the smallest $\sigma$ that satisfies the target $\epsilon$. As for the nonconvex setting, they state that "the non-convex unlearning bound... currently is not
tight enough to be applied in practice due to its exponential dependence on various hyperparameters."


\end{document}